\documentclass{article}

\usepackage{microtype}
\usepackage{graphicx}
\usepackage{subcaption}
\usepackage{booktabs} 
\usepackage{amsthm}

\usepackage{hyperref}
\usepackage{thm-restate}
\usepackage{lipsum,booktabs}

\usepackage[preprint]{icml2026}

\usepackage{amsmath}
\usepackage{amssymb}
\usepackage{mathtools}
\usepackage{amsthm}

\usepackage[capitalize,noabbrev]{cleveref}

\theoremstyle{plain}
\newtheorem{theorem}{Theorem}[section]

\newtheorem{lemma}[theorem]{Lemma}
\newtheorem{corollary}[theorem]{Corollary}
\theoremstyle{definition}
\newtheorem{definition}[theorem]{Definition}

\theoremstyle{remark}

\usepackage[textsize=tiny]{todonotes}

\usepackage{csquotes}

\icmltitlerunning{Caterpillar of Thoughts}

\renewcommand{\epsilon}{\varepsilon}
\renewcommand{\emptyset}{\varnothing}

\newcommand{\card}[1]{|#1|}

\newcommand{\poly}{\textnormal{poly}}

\newcommand{\floor}[1]{\left\lfloor #1 \right\rfloor}

\DeclareMathOperator*{\argmin}{arg\,min}

\newcommand{\opt}{\mathsf{OPT}}
\newcommand{\nbo}{\opt_{\mathcal{NB}}}
\newcommand{\oapx}{\widehat{\opt}}

\newcommand{\dist}{\textnormal{dist}}
\renewcommand{\P}{\mathcal{P}}

\begin{document}

\twocolumn[

  \icmltitle{Caterpillar of Thoughts:\\ The Optimal Test-Time Algorithm for Large Language Models}

  \icmlsetsymbol{equal}{*}

  \begin{icmlauthorlist}
    \icmlauthor{Amir Azarmehr}{equal,yyy}
    \icmlauthor{Soheil Behnezhad}{equal,yyy}
    \icmlauthor{Alma Ghafari}{equal,yyy}
  \end{icmlauthorlist}

  \icmlaffiliation{yyy}{Northeastern University }

 \icmlcorrespondingauthor{ }

  \vskip 0.3in
]
\newcommand{\snote}[1]{{\color{brown} [Soheil: #1]}}
\newcommand{\aanote}[1]{{\color{purple} [Amir: #1]}}
\newcommand{\agnote}[1]{{\color{blue} [Alma: #1]}}


\printAffiliationsAndNotice{\icmlEqualContribution}

\begin{abstract}
 Large language models (LLMs) can often produce substantially better outputs when allowed to use additional test-time computation, such as sampling, chain of thought, backtracking, or revising partial solutions. Despite the growing empirical success of such techniques, there is limited theoretical understanding of how inference time computation should be structured, or what constitutes an optimal use of a fixed computation budget.

We model test-time computation as an algorithm interacting with a Markov chain: at any point, the algorithm may resume generation from any previously observed state. That is, unlike standard Markov chains where the states are drawn ``passively'', we allow the algorithm to ``backtrack'' to any previously observed state of the Markov chain at any time.

Many of the existing test-time algorithms such as Chain-of-Thought (CoT) \cite{wei2023chainofthoughtpromptingelicitsreasoning}, Tree-of-Thoughts (ToT) \cite{yao2023treethoughtsdeliberateproblem}, or Best-of-$k$ \cite{brown2024largelanguagemonkeysscaling} could be seen as specific algorithms in this model.

We prove that while backtracking can reduce the number of generations exponentially, a very limited form of backtracking is theoretically sufficient. Namely, we show that the optimal algorithm always generates a ``caterpillar'' tree. That is, if we remove the leaves of the state tree generated by the optimal algorithm, we obtain a path.
Motivated by our characterization of the optimal algorithm, we present Caterpillar of Thoughts (CaT), a new test-time computation algorithm, reducing the number of token/state generations. Our empirical evaluation shows that CaT, compared to ToT, achieves a better success rate while also reducing the number of token generations.
\end{abstract}

\section{Introduction}

Recent advances in large language models have demonstrated that performance can often be improved not by additional training, but by allocating more computation at inference time. Techniques such as self consistency \cite{wang2023selfconsistencyimproveschainthought}, 
tree-of-thought exploration \cite{yao2023treethoughtsdeliberateproblem}, 
iterative refinement \cite{madaan2023selfrefine}, recursive self-aggregation \cite{venkatraman25selfaggregation},  verifier guided search \cite{lightman2023verify, botta2025querycomplexityverifierassistedlanguage, snell2024scalingllmtesttimecompute, wang2025valueguidedsearchefficientchainofthought}, and backtracking \cite{ vonrütte2025generalizedinterpolatingdiscretediffusion,yang2025stepleapforwardselfbacktracking} allow an LLM to sample multiple continuations, revisit earlier partial solutions, and selectively pursue promising directions. These methods have proven effective across various reasoning, planning, and creative generation tasks, suggesting that inference time computation plays a central role in enabling more capable and adaptive language agents.


Despite this progress, the algorithmic foundations of test-time computation remain poorly understood.  Most existing approaches are motivated heuristically, and it is unclear how close they are to optimal, or how performance should scale with additional inference-time resources. More fundamentally, {\em given a fixed but non-trivial budget, what is the best possible way to deploy test-time compute?}

\subsection{Our Contributions}

\paragraph{A Theoretical Model of Test-Time Computation:} To answer the question above, we need a theoretical model for test-time computation. We model this problem as exploring a (massive) {Markov chain}, which is defined based on a pretrained LLM. 
Each state in the Markov chain represents a \emph{partial solution}, i.e., a sequence of tokens.
From any state, the LLM induces a distribution over successor states via sampling new tokens. 
That is, the Markov chain randomly transitions to a new state, corresponding to the solution obtained by appending the new tokens to the current partial solution.
Crucially, modern inference time methods allow the algorithm to rewind: rather than continuing only from the most recently generated state, it may resume generation from any previously observed state.


This leads naturally to the model of a fully observable Markov chain with rewinding \cite{azarmehr2026markovchainsrewinding}, in which an algorithm adaptively chooses both where to rewind and how long to explore. The goal is to reach a designated target state representing, for example, a correct answer or a sufficiently high-quality output, while minimizing the expected number of steps (i.e., total number of observed states). We remark that allowing the algorithm to rewind can exponentially decrease the number of steps (This is discussed further in \cref{sec:dummy}, see \cref{fig:dummy-chain} for an example).

\begin{figure}
    \centering
    \resizebox{0.5\textwidth}{!}{
    \usetikzlibrary{arrows.meta,calc}

\tikzset{
    green node/.style={circle, draw, fill={rgb,255:red,184;green,233;blue,134}, minimum size=9mm, inner sep=1pt},
    white node/.style={circle, draw, fill=white, minimum size=9mm, inner sep=1pt},
    arrow style/.style={->, >=stealth, line width=0.75pt}
}

\begin{tikzpicture}[
  >=Stealth,
  node distance=1.5cm and 1.5cm,
  state/.style={draw, circle, minimum size=9mm, inner sep=1pt, align=center},
  lab/.style={font=\small, inner sep=1pt}
]

\node[green node, state] (x0) {$x_0$};
\node[green node, state, right=of x0] (x1) {$x_1$};
\node[green node, state, right=of x1] (x2) {$x_2$};
\node[lab, right=of x2] (dots1) {$\cdots$};
\node[green node, state, right=of dots1] (xn) {$x_n$};

\node[state, below=1.5cm of x2] (D) {$D$};

\draw[->] (x0) -- node[lab, pos=0.5, above] {$p$} (x1);
\draw[->] (x1) -- node[lab, pos=0.5, above] {$p$} (x2);
\draw[->] (x2) -- node[lab, pos=0.5, above] {$p$} (dots1);
\draw[->] (dots1) -- node[lab, pos=0.5, above] {$p$} (xn);
\draw[->] (x0) -- node[lab, pos=0.3, left] {$1-p$} (D);
\draw[->] (x1) -- node[lab, pos=0.3, left] {$1-p$} (D);
\draw[->] (x2) -- node[lab, pos=0.3, left] {$1-p$} (D);
\draw[->] (xn) -- node[lab, pos=0.3, left] {$1$} (D);
\draw[->] (D) edge[loop below] node[lab] {$1$} (D);
\node[lab] at ($(dots1)!0.4!(D)$) {$\ldots$};

\end{tikzpicture}
    }
    \caption{A Markov chain where rewinding is crucial, consisting of $n+1$ states $x_0 \rightarrow x_1 \rightarrow  \ldots \rightarrow x_{n}$ on a path, and one dummy state $D$.
    The arrows denote the transition probabilities.
    The first state on the path, $x_0$, is the starting point of the algorithm, and the last state, $x_{n}$, is the target state.
}
    \label{fig:dummy-chain}
\end{figure}
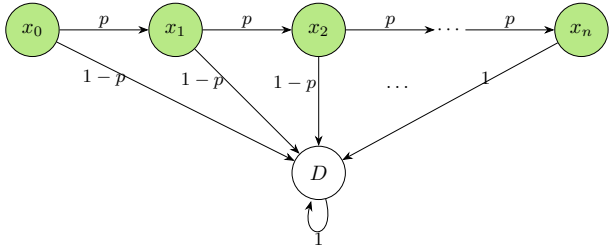

The set of observed states can be interpreted as a tree: each observed state corresponds to a node of the tree, and rewinding to a previous state opens a new branch at the corresponding node.
Many existing test-time inference strategies fit into this framework.
For example, Chain~of~Thoughts (CoT)\cite{wei2023chainofthoughtpromptingelicitsreasoning} explores a single path in the Markov chain, starting at a carefully chosen initial state.
Best of $k$ and self-consistency \cite{wang2023selfconsistencyimproveschainthought} can be interpreted as exploring multiple paths that share the same root, and are otherwise independent.
Other approaches such as Tree of Thoughts (ToT)~\cite{yao2023treethoughtsdeliberateproblem} and Monte Carlo Tree Search (MCTS) \cite{daineseMTCS, park2024ensemblinglargelanguagemodels, LiuMCTS} employ more intricate methods for exploring the tree.
Our rewinding Markov chain model can simulate these strategies by appropriate choices of the rewinding distribution over the observed set.
Moreover, recent verifier-guided backtracking methods can also be expressed within our framework: in \cref{sec:vgb} we show that the proposed Markov chain of \citet{rohatgi2025tamingimperfectprocessverifiers} is recovered as a special case of our model, and our framework generalizes it.
\vspace{-3mm}

\paragraph{The Optimal Algorithm:} Our main contribution is characterizing the optimal solution in this model, i.e., presenting an algorithm for exploring the Markov chain that reaches the target state with the minimum number of steps, in expectation.
We show that no matter what the Markov chain looks like, the optimal generated tree is always a {\em caterpillar}.
That is, all the nodes are within distance $1$ of a central path, and the algorithm may rewind only to withdraw the most recently generated state. 
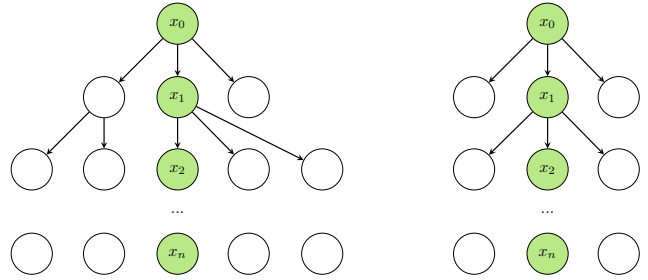
\begin{figure}[h]
    \centering
    \resizebox{0.5\textwidth}{!}{
    \tikzset{
    green node/.style={circle, draw, fill={rgb,255:red,184;green,233;blue,134}, minimum size=9mm, inner sep=1pt},
    white node/.style={circle, draw, fill=white, minimum size=9mm, inner sep=1pt},
    arrow style/.style={->, >=stealth, line width=0.75pt}
}

\begin{tikzpicture}[x=0.75pt,y=0.75pt,yscale=-1,xscale=1]
\def\levelzero{55.77}
\def\levelone{116.77}
\def\leveltwo{176.77}
\def\leveln{246.77}

\node[green node] (x0-left) at (176.9,\levelzero) {$x_0$};

\node[white node] (a1-left) at (115.9,\levelone) {};
\node[green node] (b1-left) at (176.9,\levelone) {$x_1$};
\node[white node] (c1-left) at (235.9,\levelone) {};

\node[white node] (a2-left) at (55.9,\leveltwo) {};
\node[white node] (b2-left) at (115.9,\leveltwo) {};
\node[green node] (c2-left) at (176.9,\leveltwo) {$x_2$};
\node[white node] (d2-left) at (235.9,\leveltwo) {};
\node[white node] (e2-left) at (296.9,\leveltwo) {};

\node at (176.9,212) {$...$};

\node[white node] (a3-left) at (55.9,\leveln) {};
\node[white node] (b3-left) at (115.9,\leveln) {};
\node[green node] (c3-left) at (176.9,\leveln) {$x_n$};
\node[white node] (d3-left) at (235.9,\leveln) {};
\node[white node] (e3-left) at (296.9,\leveln) {};

\draw[arrow style] (x0-left) -- (a1-left);
\draw[arrow style] (x0-left) -- (b1-left);
\draw[arrow style] (x0-left) -- (c1-left);

\draw[arrow style] (b1-left) -- (c2-left);
\draw[arrow style] (b1-left) -- (d2-left);
\draw[arrow style] (b1-left) -- (e2-left);

\draw[arrow style] (a1-left) -- (a2-left);
\draw[arrow style] (a1-left) -- (b2-left);

\node[green node] (x0-right) at (483.9,\levelzero) {$x_0$};

\node[white node] (a1-right) at (422.9,\levelone) {};
\node[green node] (b1-right) at (483.9,\levelone) {$x_1$};
\node[white node] (c1-right) at (542.9,\levelone) {};

\node[white node] (a2-right) at (422.9,\leveltwo) {};
\node[green node] (b2-right) at (483.9,\leveltwo) {$x_2$};
\node[white node] (c2-right) at (542.9,\leveltwo) {};

\node at (483.9,212) {$...$};

\node[white node] (a3-right) at (422.9,\leveln) {};
\node[green node] (b3-right) at (483.9,\leveln) {$x_n$};
\node[white node] (c3-right) at (542.9,\leveln) {};

\draw[arrow style] (x0-right) -- (a1-right);
\draw[arrow style] (x0-right) -- (b1-right);
\draw[arrow style] (x0-right) -- (c1-right);

\draw[arrow style] (b1-right) -- (a2-right);
\draw[arrow style] (b1-right) -- (b2-right);
\draw[arrow style] (b1-right) -- (c2-right);

\end{tikzpicture}
    }
    \caption{This figure illustrates the tree explored by two different algorithms, with \cref{fig:dummy-chain} being the input Markov chain. 
    White circles represent state $D$, and green states represent the optimal path to the target state.
    The right side tree represents the explored tree by CaT, and the left side tree represents a BFS-like algorithm that expands without pruning the dummy state. }
    \label{fig:placeholder}
\end{figure}
More precisely, for each state $x$, we define a value $\opt(x)$, which is the optimal number of steps that an algorithm requires in expectation to reach the target state from $x$.
This value guides our algorithm in exploring the Markov chain, and can be computed exactly given full knowledge of the Markov chain, or otherwise approximated.
That is, the algorithm maintains a \emph{minimizer state} $x^*$, initialized to the starting state.
Then, it repeatedly generates the next state $g$ from $x^*$, until it obtains a state with $\opt(g) < \opt(x^*)$, which in turn replaces $x^*$ and undergoes the same process.
This is formalized below as \cref{alg:opt}.

\begin{algorithm}
    \caption{Caterpillar of Thoughts}
    \label{alg:opt}
    \begin{algorithmic}

    \STATE \textbf{Input:}
    An initial state $x_0$, and a target state $z$
    
    \STATE $x^* \gets x_0$

    \WHILE{$x^* \neq z$}
        \STATE $g \gets $ next state drawn from $x^*$
        \IF{$\opt(g) < \opt(x^*)$}
            \STATE $x^* \gets g$
        \ENDIF
    \ENDWHILE
     \end{algorithmic}
\end{algorithm}

\begin{restatable}{theorem}{nonbranch}\label{thm:optimal-non-branching}
    Given an initial state $x_0$, \cref{alg:opt} reaches the target state using the optimal number of steps in expectation, i.e., in $\opt(x_0)$ steps.
\end{restatable}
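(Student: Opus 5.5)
The plan is to characterize $\opt(x)$ by a one-dimensional fixed-point equation and then check that \cref{alg:opt} attains it. Write $P(x,\cdot)$ for the transition distribution. For a state $x$ with $\opt(x)$ finite, I will show that
\[
\opt(x) \;=\; 1 + \sum_{y} P(x,y)\,\min\{\opt(y),\,\opt(x)\}, \qquad \opt(z)=0,
\]
where each step counts one generated state. Equivalently, $\opt(x)$ is the unique $t$ solving
\[
\sum_y P(x,y)\,(t-\opt(y))^+ = 1 .
\]

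The upper bound is essentially the analysis of \cref{alg:opt}. Run from $x$, the algorithm draws $g$ and then does one of two things. If $\opt(g)<\opt(x)$, it recurses from $g$. Otherwise it discards $g$ and is back in the same situation, since the process is memoryless. So if $A(x)$ denotes its expected cost, then
\[
A(x) = 1 + \sum_y P(x,y)\,\big[\mathbf{1}_{\opt(y)<\opt(x)}\,A(y) + \mathbf{1}_{\opt(y)\ge \opt(x)}\,A(x)\big].
\]
I will prove $A(x)=\opt(x)$ by induction on states ordered by increasing $\opt$; this needs care for infinitely many states, so I would use monotone convergence over truncated horizons. Plugging $A(y)=\opt(y)$ for $\opt(y)<\opt(x)$ and solving the linear equation for $A(x)$ gives exactly the fixed-point equation above. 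Hence $A(x)=\opt(x)$, provided the equation is valid.

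The main step, and the one I expect to be hardest, is the lower bound: no adaptive rewinding strategy beats the fixed-point value. A general algorithm maintains a whole observed set $S$ and may branch from any node. I would assign to each observed set the potential $\Phi(S)=\min_{s\in S} \opt(s)$. The goal is to show that the expected cost-to-go of any strategy from $S$ is at least $\Phi(S)$, i.e.\ that $\Phi(S_t)+t$ is a submartingale.

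Consider a step that expands some $s\in S$. The only thing that matters is whether the new state $y$ lowers the minimum. I would use an index-policy (Gittins-style) argument: each observed node behaves like an independent "arm", and the quantity $\opt(s)$ is its index. To make this rigorous, I would prove by induction over a finite horizon, or via a Bellman/value-iteration argument, that the optimal value $V(S)$ of the rewinding problem from set $S$ equals $\min_{s\in S}\opt(s)$. Concretely:
\begin{itemize}
\item Let $V_k$ be the optimal expected cost truncated after $k$ steps. Show by induction on $k$ that $V_k(S)\le\min_s \opt(s)$ and that $V_k(S)$ is at least the analogous truncated quantity depending only on the minimum.
\item The inductive step compares expanding a non-minimizing node $s'$ against the minimizer $s^*$. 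Expanding $s'$ yields the value $1+\mathbb{E}[\min(\opt(y),\Phi)]$ with $y\sim P(s',\cdot)$.
\item This is at least $\Phi$, because the function $t\mapsto 1+\mathbb{E}\min(\opt(y),t)-t$ is nonnegative for $t\le \opt(s')$. That holds since $\sum_y P(s',y)(t-\opt(y))^+\le 1$ there, by the defining fixed-point property of $\opt(s')$.
\end{itemize}
So every expansion weakly fails to beat $\Phi$. Expanding the minimizer achieves equality, which also justifies the fixed-point identity.

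Finally, I would handle states with $\opt=\infty$ separately, since the algorithm never moves to them. Taking limits $k\to\infty$ then gives $\opt(x_0)=V(\{x_0\})=A(x_0)$, which is the statement of \cref{thm:optimal-non-branching}.
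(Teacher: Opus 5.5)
Your route is different from the paper's downward induction on $|S|$: you use a potential $\Phi(S)=\min_{s\in S}\opt(s)$ for the lower bound and a direct recursion for the cost $A$ of \cref{alg:opt}. It can be made to work, but as written it is circular. In the paper, $\opt(x)$ is the optimal expected hitting time over all rewinding strategies; it is not defined as a fixed point. Your argument nevertheless leans on a fixed-point identity in three places:
\begin{enumerate}
\item The upper bound is explicitly conditional on the identity $\opt(x)=1+\sum_y P(x,y)\min\{\opt(y),\opt(x)\}$.
\item The lower bound invokes ``the defining fixed-point property of $\opt(s')$'' to get $\sum_y P(s',y)(t-\opt(y))^+\le 1$ for $t\le\opt(s')$.
\item The only justification offered for the identity is that ``expanding the minimizer achieves equality'' in that same potential argument, and that equality holds only if the identity is already known.
\end{enumerate}
So neither the identity nor the inequality is ever established, yet both halves of your proof rest on them.

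The missing step is easy to supply, and the lower bound needs only the `$\le 1$' half.
\begin{itemize}
\item \emph{The inequality.} From $\{s'\}$, take the strategy that rewinds to $s'$ until it draws some $y\in L:=\{y:\opt(y)<\opt(s')\}$, then plays optimally from $y$. Its expected cost is $\bigl(1+\sum_{y\in L}P(s',y)\opt(y)\bigr)/P(s',L)$. Since $\opt(s')$ is at most this, rearranging gives exactly $\sum_y P(s',y)(\opt(s')-\opt(y))^+\le 1$. If $\opt(s')=\infty$, every successor of $s'$ also has infinite $\opt$, so expanding $s'$ never lowers a finite potential.
\item \emph{The lower bound.} Your one-step computation now shows that $(t\wedge\tau)+\Phi(S_{t\wedge\tau})$ is a submartingale under any strategy. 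Since $0\le\Phi(S_t)\le\Phi(S_0)$, optional stopping plus dominated convergence gives $\mathbb{E}[\tau]\ge\Phi(S_0)$ whenever $\mathbb{E}[\tau]<\infty$. With the trivial upper bound this yields $\opt(S)=\min_{s\in S}\opt(s)$, the paper's \cref{cor:minimizer}.
\item \emph{The identity.} It now follows non-circularly from the first-step equation at $\{x\}$: $\opt(x)=1+\sum_y P(x,y)\,\opt(\{x,y\})=1+\sum_y P(x,y)\min\{\opt(x),\opt(y)\}$. Your induction for $A$ then finishes the proof.
\end{itemize}

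Drop the finite-horizon plan if $V_k(S)=\inf\mathbb{E}[\tau\wedge k]$ is pure truncation, because truncated values are not determined by the minimizer. Take $P(a,y)=P(a,D)=\tfrac12$, $P(y,z)=1$, $P(b,z)=0.1$, $P(b,D)=0.9$, with $D$ absorbing. Then $V_3(\{a,b\})\le 2.45<2.5=\min\{V_3(\{a\}),V_3(\{b\})\}$, since a failed try at $a$ can be followed by a last-chance try at $b$. A horizon-$k$ problem with terminal cost $\Phi$ does work, but that is just the submartingale argument again.

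Once repaired, your route gives an explicit potential certificate and obtains the recursion of \cref{cor:opt-recursion} before the optimality of \cref{alg:opt}. The paper instead gets optimality from pure comparison inequalities, $\opt(S\cup\{g\})\le\opt(S)\le\nbo(x_g)$, and reads the recursion off afterwards.
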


Observe that the algorithm always keeps track of only two states, the minimizer state $x^*$, and the most recently generated state $g$.
It may rewind to $x^*$ or otherwise continue from $g$, resulting in a tree that consists of leaves hanging from a central path, i.e., a caterpillar.

\paragraph{Massive Markov Chains:} We note that, in practice, computing the exact value of $\opt(x)$ might not be feasible given that the Markov chain corresponding to a pre-trained LLM could be huge. A nice property of \cref{alg:opt} is that only access to the values of $\opt(\cdot)$ is sufficient for its implementation. In other words, given oracle access to $\opt(\cdot)$, additional knowledge of the Markov chain's structure is not helpful. In this paper, we also study whether approximations of this oracle are sufficient. In particular,  in \cref{sec:noise}, we present an algorithm that reaches the target state in $\poly(\opt(x))$ steps, given only noisy approximations of $\opt$.

\textbf{Experiments:} 
In \cref{sec:experiments}, we evaluate a variant of the optimal \Cref{alg:opt} (which is more robust when we only have approximations of $\opt(x)$) empirically on two standard reasoning benchmarks of the literature: the Game of 24 and $5\times5$ crosswords. See \Cref{tab:game24-results} for a direct comparison between this algorithm and Tree-of-Thoughts (ToT). While our algorithm improves the success rate significantly from 74\% to 81\%, it also reduces the number of generated tokens (and thus the cost of evaluations) by more than 20\%.

\begin{table}[t]
\caption{Performance comparison on the Game of 24 benchmark. Tree-of-Thoughts uses a beam size of $5$. 
The proposed method reports best of $2$ runs, for maximum budget of iterations (steps) set to be $10$ and $15$. By tokens in this table we mean sum of generated tokens and prompt tokens.}
\centering
\begin{tabular}{lcccc}
\toprule
Method & Success Rate(\%) & Avg. Tokens \\
\midrule
Tree-of-Thoughts \\\cite{yao2023treethoughtsdeliberateproblem} & 74  & 
19.2k \\
\midrule
\textbf{CaT} 
\\
(15 steps) &  \textbf{81} & \textbf{
15.3k}  \\
\midrule
\textbf{CaT} 
\\(10 steps) &  \textbf{78} & \textbf{14.2k
}\\
\bottomrule
\end{tabular}

\label{tab:game24-results}
\end{table}

\section{Preliminaries}
\paragraph{The Formal Model:}
We consider a finite-state fully observable Markov chain
$M=(\Omega,P)$ with transition probabilities $P:\Omega\times\Omega\to[0,1]$.
In the rewinding model, unlike a standard Markov chain, in each step the process is allowed to \emph{rewind} to any previously visited state.

Given a fully observable Markov chain $M = (\Omega, P)$, an initial state $x_0 \in \Omega$, and a target state $z \in \Omega$, the goal is to devise a rewinding algorithm that minimizes the expected number of steps to obtain the target state $z$.  

\begin{definition}[Rewinding Algorithms]
    Given a fully observable Markov chain $M= (\Omega,P)$, and  an initial state $x_0$, in every step $t \geq 0$, a rewinding algorithm chooses a rewinding time $t' \leq t$, and obtains a new state $x_{t + 1}$ according to the transition probabilities of $x_{t'}$.
    We refer to $x_{t'}$ as the parent state of $x_{t+1}$, and denote it by $p(x_{t+1})$.
\end{definition}

Our main contribution is characterizing the optimal algorithm, which reaches the target state in the minimum number of steps.

\begin{definition}[The Optimal Algorithm]
    Given a tree of states observed so far, $X = (x_0, \cdots, x_t)$, a rewinding algorithm for the remainder of the process is optimal if it minimizes the expected number of steps before obtaining the target state $z$.
    We use $\opt(X)$ to denote the expected hitting time of such an optimal algorithm.
\end{definition}

For the remainder of the paper, we use $\mathsf{O}(\cdot)$ instead of $\opt(\cdot)$ for brevity.
\let\oldopt\opt
\renewcommand{\opt}{\mathsf{O}}

\section{Fully Observable Markov Chain}

\subsection{Optimal Algorithm}

This section is devoted to the proof of our main theorem, \cref{thm:optimal-non-branching}.
First, we establish that the optimal algorithm depends only on the set of distinct states observed so far, and the exact structure of the tree is inconsequential. The proof is deferred to \cref{sec:proofs}.

\begin{restatable}{lemma}{lemmaOptByS}\label{lem:opt-by-S}
There exists an optimal rewinding algorithm such that
given a sequence of states $X = (x_0, \cdots, x_t)$,
the rewinding time $t' \leq t$ chosen for obtaining the next state is determined solely based on the set of distinct states $S$ in $X$. 
We use $\opt(S) := \opt(X)$ to denote the optimal hitting time corresponding to $S$, and $\opt(x)$ as a shorthand for $\opt(\{x\})$.    
\end{restatable}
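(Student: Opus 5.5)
Proof plan for \cref{lem:opt-by-S}: view the rewinding process as a Markov decision process whose state is the full history $X=(x_0,\dots,x_t)$. At each step the action is a rewinding time $t'\le t$, and the transition appends a sample from $P(x_{t'},\cdot)$ at unit cost. The key observation is that the next-state distribution depends only on the \emph{state} $x_{t'}$, not on the time $t'$ or on the parent structure. Likewise, the hitting event depends only on whether $z$ has appeared in the set of observed states. So the history can be collapsed to its set $S$ of distinct states, with action space $S$ itself: "rewind to some occurrence of $y\in S$."

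The argument proceeds in three steps.

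\emph{Step 1 (coupling).} Take two histories $X, X'$ with the same distinct-state set $S$. Any strategy $\mathcal{A}$ run from $X$ can be simulated from $X'$. Whenever $\mathcal{A}$ rewinds to an index whose state is $y$, the simulation rewinds to some index of $X'$ (or of a later-generated state) carrying the same state $y$. Newly generated states are mapped one-to-one. Coupling the random draws shows the two runs produce identical sequences of new states, so their hitting times agree in distribution. Hence $\opt(X)=\opt(X')$, and $\opt(S)$ is well defined.

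\emph{Step 2 (Bellman equation).} Consequently $\opt$ satisfies
\[
\opt(S)=1+\min_{y\in S}\sum_{w}P(y,w)\,\opt(S\cup\{w\}),
\]
with $\opt(S)=0$ when $z\in S$.

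\emph{Step 3 (stationary strategy).} Define the algorithm that, at set $S$, rewinds to an argmin $y^*(S)$ of the Bellman equation, breaking ties by a fixed order, to any occurrence of that state. This choice depends only on $S$. It then remains to show that this stationary strategy achieves $\opt(S)$.

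The main obstacle is Step 3. The set $S$ is monotone nondecreasing but can remain unchanged (for instance, on a self-loop), so this is a total-cost MDP with possibly infinite horizons, and a naive induction on $|S|$ does not terminate immediately. The plan is to stratify by $S$ in decreasing order of inclusion. Among sets containing $S$, the only non-progressing transitions are those with $w\in S$. Writing $q=\sum_{w\in S}P(y,w)$, the equation for $\opt(S)$ at fixed $y$ solves to
\[
\opt(S)=\frac{1+\sum_{w\notin S}P(y,w)\,\opt(S\cup\{w\})}{1-q}
\]
whenever $q<1$. When $q=1$, the choice $y$ is useless and yields infinite cost. So $\opt(S)$ is a minimum over $y$ of expressions involving only strictly larger sets. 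By backward induction on $|\Omega\setminus S|$, with base case $z\in S$ and finitely many states, one shows that the stationary choice attains this minimum. This handles the self-loops explicitly.

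Sets from which $z$ is unreachable have $\opt=\infty$, and any choice is optimal there. Finiteness of $\Omega$ guarantees that the minima exist.
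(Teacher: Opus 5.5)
Your Step 1 is exactly the paper's proof. The paper couples an arbitrary strategy run after $X^{(1)}$ with one run after $X^{(2)}$ that rewinds to the same state, notes that the distinct-state sets then evolve identically, and concludes $\opt(X^{(1)})=\opt(X^{(2)})$ (\cref{clm:opt_s}). The lemma is then obtained by applying this to a duplicate-free sequence with state set $S$.

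The paper stops there. So it really only shows that the \emph{value} depends on $S$ alone. It also presupposes two things: that an optimal algorithm exists (it simply picks $\sigma$ optimal), and that one can be chosen whose rewinding state is a function of $S$.

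Your Steps 2--3 supply exactly this missing part. You eliminate the non-progressing draws $w\in S$ to get $\opt(S)=\min_{y\in S}V_y(S)$, where $V_y(S)$ involves only strictly larger sets. You then run a backward induction on $|\Omega\setminus S|$. Together these prove that the infimum is attained and produce a genuinely stationary optimal policy. This is also precisely what the paper later uses without comment in \cref{thm:main} (``without loss of generality \ldots rewinds to $a$ until it obtains a state $g\notin S$''). It mirrors the computation in \cref{cor:opt-recursion}. The paper's version is shorter and suffices to make the notation $\opt(S)$ well defined; yours is the complete argument for the existence claim actually stated.

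One small point of care concerns how you get the two inequalities. Obtain $\opt(S)\ge\min_y V_y(S)$ from the one-step inequality; this is trivial when $\opt(S)=\infty$. Obtain $\opt(S)\le V_{y^*}(S)$ from the cost of the stationary strategy itself. Do not get them by ``solving'' the fixed-point equation: when every $y\in S$ has $P(y,S)>0$, that equation is also satisfied by the spurious value $\opt(S)=\infty$.
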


Next, we define non-branching rewinding algorithms, which are algorithms that evolve a caterpillar tree.

\begin{definition}[Non-branching Algorithms]
A rewinding algorithm is \textbf{non-branching} if in any step $t \geq 0$,
it either rewinds to the most recently obtained state $x_t$,
or to $p(x_t)$, i.e.\ to the same rewinding time as the previous step.
For a state $a \in \Omega$, we let $\nbo(a)$ denote the minimum expected hitting time of $z$ for non-branching algorithms starting at $a$.
\end{definition}

The key idea behind the proof of \cref{thm:optimal-non-branching} is that the optimal algorithm is a non-branching one.

\begin{theorem}\label{thm:main}
    For any set of states $S \subseteq \Omega$, there exists a state $x \in S$ such that
    $$
    \opt(S) = \nbo(x).
    $$
\end{theorem}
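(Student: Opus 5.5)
The plan is to show that $\opt(S) = \min_{x \in S} \nbo(x)$. This gives the theorem, since the minimizer $x$ then satisfies $\opt(S) = \nbo(x)$.

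One inequality is immediate. Any non-branching algorithm started from $x \in S$ is also a valid rewinding algorithm from $S$, since it simply ignores the other observed states. Hence $\opt(S) \le \nbo(x)$ for every $x \in S$.

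For the reverse inequality $\opt(S) \ge \min_{x\in S} \nbo(x)$, I would argue by strong induction. Work with the finite-horizon/truncated version of the problem, or use a well-founded ordering on expected values, and then pass to the limit. The induction is over the number of steps remaining, and the statement is proved simultaneously for all sets $S$.

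By \cref{lem:opt-by-S}, the optimal algorithm at $S$ picks some state $a \in S$ and draws $y \sim P(a,\cdot)$. This gives the Bellman equation
$$\opt(S) = 1 + \sum_y P(a,y)\,\opt(S \cup \{y\}),$$
with $\opt(S)=0$ if $z \in S$. By induction, $\opt(S\cup\{y\}) = \min\big(m_S, \nbo(y)\big)$, where $m_S := \min_{x\in S}\nbo(x)$. The key point is that a newly observed state only matters through its own $\nbo$ value.

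Now compare this with the non-branching algorithm that starts at $a$. This algorithm repeatedly samples from $a$ until it sees a $y$ with $\nbo(y) < \nbo(a)$, and then continues non-branchingly from $y$. Its cost $\nbo(a)$ satisfies the analogous equation
$$\nbo(a) = 1 + \sum_y P(a,y)\min\big(\nbo(a), \nbo(y)\big).$$
This holds because a non-branching algorithm at $a$ after drawing $y$ either keeps $a$ as its rewinding point or moves to $y$, and it chooses the better of the two.

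Write $m = m_S$. If $\nbo(a) = m$, the two equations coincide and we are done. Otherwise $\nbo(a) > m$, and I would show that $f(c) := 1 + \sum_y P(a,y)\min(c,\nbo(y))$ is a contraction-like map. Specifically, $f$ is nondecreasing in $c$ with slope at most $P(a,\cdot)$-mass $\le 1$, and $\nbo(a)$ is the least fixed point of $f$. This requires a concavity/fixed-point argument, handled by truncation when $P(a,a)$-type self-loops make the slope equal to $1$.

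From $\nbo(a) = f(\nbo(a))$ and $f(c) - c$ decreasing in $c$, together with $m < \nbo(a)$, we get $f(m) \ge m$. Hence
$$\opt(S) = f(m) \ge m,$$
which is the desired inequality.

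The main obstacle is making the induction and the fixed-point step rigorous, since the horizon is unbounded. There are two sub-issues. First, the identity $\opt(S\cup\{y\}) = \min(m_S,\nbo(y))$ is applied to larger sets, but these have no smaller "size" in general. Second, the concavity claim $f(c)-c$ decreasing needs care when $\nbo$ values are infinite.

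I would handle both by defining $\opt_T$ and $\nbo_T$ for the problem truncated at $T$ steps, with the cost capped. I would then induct on $T$, where the Bellman equations are genuine recursions in $T$: $\opt_T(S) = 1+\sum_y P(a,y)\opt_{T-1}(S\cup\{y\})$ and $\nbo_T(a) = 1+\sum_y P(a,y)\min(\nbo_{T-1}(a),\nbo_{T-1}(y))$. In this form the inequality $\opt_T(S)\ge \min_{x\in S}\nbo_T(x)$ follows directly from monotonicity, because $\min(m,\nbo_{T-1}(y)) \ge \min(\nbo_{T-1}(x'),\nbo_{T-1}(y))$ for the minimizing $x'$ combined with the minimization over $a$. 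A monotone convergence argument as $T\to\infty$ then finishes the proof.
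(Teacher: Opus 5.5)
\textbf{The truncation step fails.} This is the step where you propose to make the argument rigorous. Substituting the inductive hypothesis into $\opt^{(T)}(S)=1+\sum_y P(a,y)\,\opt^{(T-1)}(S\cup\{y\})$ gives
\[
1+\sum_y P(a,y)\min\bigl(\nbo^{(T-1)}(x'),\nbo^{(T-1)}(y)\bigr),
\]
which draws from $a$ but falls back on $x'$. This is not $\nbo^{(T)}$ of any state. Because $\nbo^{(T-1)}(x')\le\nbo^{(T-1)}(a)$, it is \emph{at most} $\nbo^{(T)}(a)$, which is the wrong direction. It is also unrelated to $\nbo^{(T)}(x')$, since that draws from $P(x',\cdot)$.

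Drawing from $a$ while holding a better fall-back is exactly a branching move. With a finite horizon it can be strictly better, so the statement you induct on is false. Take $P(a,c)=P(a,D)=\tfrac12$, $P(b,z)=\tfrac1{10}$, $P(b,D)=\tfrac9{10}$, $P(c,z)=1$, $P(D,D)=1$, and $S=\{a,b\}$. Let the cost be $\min(\tau,3)$, with $\tau$ the hitting time. Draw from $a$; then draw from $c$ if $c$ appeared, and from $b$ otherwise. This costs $\tfrac12\cdot2+\tfrac12\bigl(\tfrac1{10}\cdot2+\tfrac9{10}\cdot3\bigr)=2.45$. In contrast, $\nbo^{(3)}(a)=2.5$ and $\nbo^{(3)}(b)=2.71$. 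Without truncation, $\opt(a)=3<\opt(b)=10$ and the theorem holds.

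Truncation destroys the stationarity the theorem rests on. In particular, your $f(c)-c$ trick needs $\nbo(a)$ to be a fixed point of the \emph{same} map $f$. That fails once $\nbo^{(T)}(a)$ is built from $\nbo^{(T-1)}$. So there is no finite-$T$ inequality to pass to the limit.

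\textbf{How to repair it.} The obstacle that led you to truncate is not real. For $y\notin S$ the set $S\cup\{y\}$ is strictly larger than $S$. Since $\Omega$ is finite, a downward induction on $\card{S}$ (base: $z\in S$) is well-founded; this is what the paper does.

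The self-reference for $y\in S$ disappears by \cref{lem:opt-by-S}: the optimal algorithm keeps drawing from $a$ until a new state appears. With $m=\min_{x\in S}\nbo(x)$, this gives
\[
\opt(S)\,P(a,\Omega\setminus S)=1+\sum_{y\notin S}P(a,y)\min(m,\nbo(y)).
\]
So $\opt(S)\ge m$ reduces to $f(m)\ge m$. This follows because $f(c)-c$ is nonincreasing and $f(\nbo(a))=\nbo(a)\ge m$. If $\nbo(a)=\infty$, every $y$ reachable from $a$ has $\nbo(y)=\infty$, and then $f(m)=1+m$.

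This untruncated version is a valid alternative proof, but you must establish the Bellman equation for $\nbo$. The paper avoids that equation. It splits on the inductive witness $x_g$ for $S\cup\{g\}$. If $x_g=g$ for every new $g$, the optimal algorithm is already non-branching from $a$. If $x_g\in S$ for some $g$, then $\opt(S)\le\nbo(x_g)=\opt(S\cup\{g\})\le\opt(S)$.
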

\begin{proof}
    We prove the statement by a downward induction on $\card{S}$. It holds trivially for any set of states $S$ that contains $z$ since we have that $\opt(S) = \nbo(z) = 0$.
    This includes the base case, $S = \Omega$.

    For the induction step, given a set $S$ with $\card{S} < \card{\Omega}$, fix any optimal algorithm, and let $a \in S$ be the rewinding state chosen for obtaining the next state.
    Let $g$ be the state that is obtained as a result.
    Observe that if $g \in S$, the optimal algorithm remains the same (by \cref{lem:opt-by-S}).
    Therefore, without loss of generality, we can assume that the optimal algorithm rewinds to $a$, until it obtains a state $g \notin S$.

    For a state $g \notin S$, let $S_g$ denote $S \cup \{g\}$.
    Since $\card{S_g} > \card{S}$, by the induction hypothesis, there exists a state $x_g \in S_g$ such that 
   \begin{equation}
\label{eq:mc1}
\opt(S_g) = \nbo(x_g).
\end{equation}
    That is, there exists a non-branching algorithm starting at $x_g$ that is optimal for $S_g$. 
    We consider two scenarios.
    First, we consider the case that for all $g \notin S$ it holds that $x_g = g$ (this includes the case where $g = z$, and the process stops), i.e., the target is obtained or there is an optimal algorithm for $S_g$ that starts at $x_g$ and does not branch.
    In this case, we have a non-branching algorithm which is optimal for $S$ that starts at $a$, meaning $\opt(S) = \nbo(a)$.
    This algorithm, rewinds to $a$ until a state $g \notin S$ is obtained, and then proceeds according to the non-branching algorithm at $x_g = g$.

    Now, consider the case where there exists a state $g \notin S$ such that $x_g \neq g$.
    Then, it holds that $x_g \in S$.
    We show that $\opt(S) = \nbo(x_g)$ (the algorithm could have started at $x_g$ instead of $a$ to begin with).
    First, observe that since $S \subseteq S_g$, any algorithm for $S$ is a valid algorithm for $S_g$. Therefore we have 
    \begin{equation}
\label{eq:mc2}
\opt(S_g) \le \opt(S).
\end{equation}
Second, since $x_g \in S$, the optimal non-branching algorithm starting at $x_g$ is a valid algorithm for $S$.
Hence,
\begin{equation}
\label{eq:mc3}
\opt(S) \le \nbo(x_g).
\end{equation}
Finally, combining \cref{eq:mc1,eq:mc2,eq:mc3}, we have
\[
\opt(S) \le \nbo(x_g) = \opt(S_g) \le \opt(S),
\]
which implies that the above inequalities hold with equality, and hence,
\begin{equation*}
    \opt(S) = \nbo(x_g).\qedhere
\end{equation*}
\end{proof}

This directly implies that, with a set of states $S$ observed so far, the optimal algorithm is a non-branching one, which starts at the state $x \in S$ that minimizes $\opt(x)$.

\begin{corollary} \label{cor:minimizer}
Given an initial set of states $S$, we have $$\opt(S) = \min_{x \in S} \opt(x) = \min_{x \in S} \nbo(x). $$
\end{corollary}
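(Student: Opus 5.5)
The plan is to sandwich $\opt(S)$ between $\min_{x\in S}\opt(x)$ and $\min_{x\in S}\nbo(x)$, using \cref{thm:main} for one direction and elementary monotonicity facts for the other.

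\emph{Upper bound.} For every $x\in S$, any algorithm designed for the singleton $\{x\}$ is also a valid algorithm when the observed set is $S \supseteq \{x\}$: it simply never rewinds to states outside $\{x\}$ and its descendants. This is the same monotonicity argument used for \cref{eq:mc2} in the proof of \cref{thm:main}. It gives $\opt(S)\le \opt(x)$ for every $x\in S$, hence $\opt(S)\le \min_{x\in S}\opt(x)$.

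Likewise, a non-branching algorithm started at $x$ is a particular rewinding algorithm from $\{x\}$. This gives $\opt(x)\le\nbo(x)$, so
\[
\opt(S)\le\min_{x\in S}\opt(x)\le\min_{x\in S}\nbo(x).
\]

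\emph{Lower bound.} Apply \cref{thm:main} to the set $S$ to obtain some $x^\dagger\in S$ with $\opt(S)=\nbo(x^\dagger)$. Then
\[
\min_{x\in S}\nbo(x)\le\nbo(x^\dagger)=\opt(S).
\]
Combined with the chain above, every inequality in it must be an equality, which proves both claimed identities. As a byproduct, applying \cref{thm:main} to singletons gives $\opt(x)=\nbo(x)$ for every $x$.

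The argument is short. The only point needing care is the claim that an algorithm for a smaller observed set remains admissible for a larger one, i.e.\ that rewinding options only grow as more states are observed. This follows directly from the definition of rewinding algorithms, exactly as in \cref{eq:mc2}, so I expect no real obstacle.
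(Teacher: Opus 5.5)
Your proof is correct and essentially matches the paper's. Both get the upper bound from the fact that an algorithm started at a single state $x \in S$ (in particular a non-branching one) is admissible for the observed set $S$, and the lower bound from applying \cref{thm:main} to $S$ itself. The only difference is cosmetic: you close one sandwich chain $\opt(S)\le\min_{x\in S}\opt(x)\le\min_{x\in S}\nbo(x)\le\opt(S)$, whereas the paper first obtains $\opt(x)=\nbo(x)$ pointwise by applying \cref{thm:main} to singletons and then shows $\opt(S)=\min_{x\in S}\nbo(x)$ directly.
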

\begin{proof}
First, note that for any state $x \in \Omega$, 
\cref{thm:main} implies $\opt(x) = \nbo(x)$ (by letting $S = \{x\}$).
Therefore, $\opt(x)$ and $\nbo(x)$ can be used interchangeably and the second equality in the claim follows.

Now, we show $\opt(S) = \min_{x \in S} \nbo(x)$.
Observe that a non-branching algorithm starting at a state $x \in S$ is a valid algorithm for $S$. Hence, 
$\opt(S) \leq \min_{x \in S} \nbo(x)$.
Moreover, by \cref{thm:main}, there exists a state $x^* \in S$ such that $\opt(S) = \nbo(x^*)$.
Therefore, $\opt(S) = \min_{x \in S} \nbo(x)$.
\end{proof}

We restate \cref{thm:optimal-non-branching} here.

\nonbranch*

\begin{proof}
    Note that the \cref{alg:opt} is non-branching since in every step, it either rewinds to $x^*$, the same rewinding state as the last step, or to $g$, the most recently obtained state.

    To show it is the optimal non-branching algorithm, by \cref{cor:minimizer}, it suffices to prove that whenever the algorithm rewinds to $x^*$, it satisfies $x^* = \argmin_{x \in S} \nbo(x)$, where $S$ denotes the set of states obtained so far.
    This can be proven inductively as follows.
    For the base case, it holds trivially since $x^*$ is equal to $x_0$, the only state in $S$.
    For the induction step, observe that if $\opt(g) \geq \opt(x^*)$, then $x^*$ is still the minimizer of $\nbo(x)$.
    Otherwise, if $\opt(g) < \opt(x^*)$, then $g$ is the new minimizer of $\nbo(x)$ in $S$, and $x^*$ is changed to $S$.
\end{proof}

We use \cref{thm:optimal-non-branching} to obtain a recursive expression of $\opt(\cdot)$, which we then utilize to compute the values of $\opt(\cdot)$, in \cref{sec:computing-opt}.

\begin{restatable}{corollary}{corOptRecursion}
\label{cor:opt-recursion}
    For a state $x \neq z$, let $L \subseteq \Omega$ be the set of states $y$ such that $\opt(y) < \opt(x)$.
    Then, it holds that
    $$
    \opt(x) = \frac{1 + \sum_{y \in L}P(x, y) \opt(y)}{P(x, L)}.
    $$
\end{restatable}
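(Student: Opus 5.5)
By \cref{thm:optimal-non-branching}, \cref{alg:opt} started at $x$ is optimal, so I would compute $\opt(x)$ by analysing that algorithm directly. From $x$ it repeatedly draws $g \sim P(x,\cdot)$, rewinding to $x$ each time, until it draws some $g$ with $\opt(g) < \opt(x)$, i.e.\ $g \in L$. At that moment the minimizer becomes $g$, and the rest of the run is exactly \cref{alg:opt} started from $g$. The only difference is that $x$ is already in the observed set, and that does not matter: \cref{cor:minimizer} gives $\opt(\{x,g\}) = \min(\opt(x),\opt(g)) = \opt(g)$.

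The calculation then proceeds as follows.

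\textbf{Step 1: the phase at $x$ is well defined.} First I check that $P(x,L) > 0$, so the denominator is nonzero. I would only need this when $\opt(x) < \infty$. Suppose instead $P(x,L)=0$. Then every draw from $x$ lands in a state $g$ with $\opt(g) \ge \opt(x)$, and \cref{cor:minimizer} gives $\opt(\{x,g\}) = \opt(x)$. So the observed set never improves, the process is stuck, and this contradicts finiteness of $\opt(x)$ for $x \neq z$. (If the paper permits $\opt(x)=\infty$, the identity holds with the convention $1/0=\infty$.)

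\textbf{Step 2: cost of the phase.} Each draw from $x$ is an independent sample, and it falls in $L$ with probability $P(x,L)$. The number of draws until the first success is therefore geometric, with mean $1/P(x,L)$; every draw, including the successful one, costs one step.

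\textbf{Step 3: where the phase exits.} The first state drawn in $L$ has distribution $P(x,y)/P(x,L)$ for $y \in L$. After it is drawn, the expected remaining cost is $\opt(y)$, by the observation above that $\opt(\{x,y\}) = \opt(y)$ together with optimality of \cref{alg:opt}.

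\textbf{Step 4: assemble.} By linearity of expectation, and since the exit state is independent of how many draws it took,
\[
\opt(x) = \frac{1}{P(x,L)} + \sum_{y\in L}\frac{P(x,y)}{P(x,L)}\,\opt(y),
\]
which is the claimed formula.

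An alternative derivation is a one-step recursion: condition on the first draw and use \cref{cor:minimizer} to obtain
\[
\opt(x) = 1 + \sum_{y} P(x,y)\min(\opt(x),\opt(y)).
\]
Splitting the sum into $y\in L$ and $y\notin L$ and solving for $\opt(x)$ gives the same identity. This route again needs $\opt(x)<\infty$ to move terms across.

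The main obstacle I expect is rigour about the strict inequality and ties. States $g$ with $\opt(g)=\opt(x)$ (including $g = x$) must be treated as non-improving, which \cref{alg:opt} does by keeping $x^*=x$. I also need the claim that after the switch the remaining cost is exactly $\opt(y)$ even though $x$ remains available. Both follow from \cref{cor:minimizer}, and the finiteness/positivity issue is handled in Step 1.
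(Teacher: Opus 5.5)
Your proposal is correct and follows the paper's approach. The paper conditions on the first draw of \cref{alg:opt} from $x$ and solves $\opt(x) = 1 + P(x,\Omega\setminus L)\opt(x) + \sum_{y\in L}P(x,y)\opt(y)$ for $\opt(x)$, which is exactly your alternative derivation (your geometric-phase computation is an equivalent repackaging), and your Step~1 check that $P(x,L)>0$ whenever $\opt(x)<\infty$ makes explicit a point the paper's rearrangement uses silently.
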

\vspace{-3mm}
\subsection{Computing $\opt(x)$}
\label{sec:computing-opt}

We present a Dijkstra-like algorithm for computing the value of $\opt(x)$ for every state $x$.
The algorithm maintains an upper bound $d_x$ for $\opt(x)$ along with a set of states $T$ that grows in every step.
At any point, for every $x \in T$, it holds that $d_x = \opt(x)$.
The algorithm is formalized as \cref{alg:dijkstra}.

\begin{algorithm}
    \caption{Computes the optimal hitting time for each initial state.}
    \label{alg:dijkstra}
    \begin{algorithmic}

    \STATE \textbf{Input:}
    A Markov chain $M = (\Omega, P)$, and a target state $z \in \Omega$.
    
    \STATE $T \gets \emptyset$
    
    \STATE $d_x \gets \begin{cases}
        0 & \textnormal{if $x = z$, and } \\
        \infty & \textnormal{otherwise}
    \end{cases}$

    \WHILE{$T \neq \Omega$ and $\min_{x \notin T}d_x < \infty$}
        \STATE $x^* \gets \argmin_{x \notin T} d_x$ \hfill (break ties arbitrarily)
        
        \STATE $T \gets T \cup \{x^*\}$
        \FOR{$x \notin T$}
            \IF{$P(x, T) > 0$}
                \STATE $d_x \gets \frac{1 + \sum_{y \in T}P(x, y) d_y}{P(x, T)}$
            \ENDIF
        \ENDFOR
    \ENDWHILE

    \RETURN{$\{d_x\}_{x \in \Omega}$}
     \end{algorithmic}
\end{algorithm}

\begin{restatable}{theorem}{thmDijkstra} \label{thm:Dijkstra}
    \cref{alg:dijkstra} computes $\opt(x)$ for every state $x \in \Omega$ and runs in time $O(n^2)$, where $n = \card{\Omega}$ denotes the number of states in the Markov chain.
    If $m$ entries of the transition probabilities $P(\cdot, \cdot)$ are non-zero, then the algorithm can be implemented in time $O((m + n) \log n)$.
\end{restatable}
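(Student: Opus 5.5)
We prove correctness by induction on the order in which states enter $T$, in the style of Dijkstra's algorithm, using \cref{cor:opt-recursion} as the Bellman-type characterization. The invariant is that after each iteration:
\begin{itemize}
\item[(i)] every $y \in T$ has $d_y = \opt(y)$;
\item[(ii)] every $y \in T$ and $x \notin T$ satisfy $\opt(y) \le \opt(x)$;
\item[(iii)] every $x \notin T$ satisfies $d_x = \bigl(1+\sum_{y\in T}P(x,y)\opt(y)\bigr)/P(x,T)$ when $P(x,T)>0$, and $d_x=\infty$ otherwise (with $z$ treated separately before it is inserted).
\end{itemize}
Each $d_x$ is an upper bound on $\opt(x)$. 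Indeed, it is the expected cost of the non-branching strategy that rewinds to $x$ until it draws a state in $T$, and then behaves optimally from that state. That strategy costs $1/P(x,T)$ expected draws plus the expected $\opt$ of the conditioned landing state.

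The key step is to show that the state $x^*$ extracted with minimal $d_{x^*}$ satisfies $d_{x^*} = \opt(x^*)$. Let $u \notin T$ minimize $\opt$ among states outside $T$, and let $L_u = \{y : \opt(y) < \opt(u)\}$. By (ii) and the choice of $u$, $L_u \subseteq T$.

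Consider the function $f(A) = \bigl(1+\sum_{y\in A}P(u,y)\opt(y)\bigr)/P(u,A)$. Adding to $A$ a state $y$ with $\opt(y) \ge f(A)$ does not decrease $f$, since $f(A\cup\{y\})$ is a weighted average of $f(A)$ and $\opt(y)$. By \cref{cor:opt-recursion}, $f(L_u) = \opt(u)$. The states in $T \setminus L_u$ have $\opt(y) \ge \opt(u)$, since otherwise they would lie in $L_u$. Hence $f(T) \le \max\bigl(f(L_u), \max_{y \in T\setminus L_u}\opt(y)\bigr)$, and more precisely $f(T)$ is a weighted average of $\opt(u)$ and values $\opt(y)$. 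By (ii), each such $\opt(y) \le \opt(u)$, so every such $y$ has $\opt(y) = \opt(u)$ and $f(T) = \opt(u)$.

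So $d_u = \opt(u)$. Then $d_{x^*} \le d_u = \opt(u) \le \opt(x^*) \le d_{x^*}$, which gives equality and preserves (ii). The boundary cases are handled directly: $P(u,T)>0$ holds because $P(u,L_u)>0$ whenever $\opt(u)<\infty$, and $z$ is extracted first.

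Invariant (iii) is maintained by the update loop, which recomputes $d_x$ over the enlarged $T$. If the loop ends because every remaining $d_x = \infty$, then no remaining state can reach $T$. Then $\opt(x) = \infty$ for all of them: by (ii), each such $x$ would otherwise have $L_x \subseteq T$ with $P(x,L_x)>0$. The main obstacle is exactly this exchange argument showing $f(T) = \opt(u)$. Ties in $\opt$ and the need for (ii) make the ordering delicate, and I would state the weighted-average monotonicity of $f$ as a separate small lemma.

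For the running time, there are $n$ iterations. In the dense implementation, each iteration scans all $x \notin T$ and updates $d_x$ in $O(1)$ time by maintaining, for every $x$, the running numerator $1+\sum_{y\in T}P(x,y)d_y$ and denominator $P(x,T)$. Adding $x^*$ changes these only by the term $P(x,x^*)$, so the total is $O(n^2)$. In the sparse implementation, we update only the in-neighbors $x$ of $x^*$ with $P(x,x^*)>0$, and keep the $d_x$ values in a priority queue with decrease-key. The values only decrease: when $x^*$ is added, $d_{x^*}$ is at most the current $d_x$, so the weighted average does not increase. This gives $O(m)$ updates and $n$ extractions, each costing $O(\log n)$, for $O((m+n)\log n)$ total.
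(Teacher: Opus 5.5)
Your proof is correct and follows essentially the same route as the paper. It maintains $d_x \ge \opt(x)$ with equality on $T$, via the strategy that rewinds to $x$ until it lands in $T$ and then acts optimally, and it uses \cref{cor:opt-recursion} to show that a state $u$ minimizing $\opt$ outside $T$ already has $d_u = \opt(u)$, which forces the extracted $x^*$ to satisfy $d_{x^*} = \opt(x^*)$. Your invariant (ii) and the weighted-average computation for ties make explicit what the paper leaves implicit (its claim that the $d_x$-strategy ``is the same as'' the optimal one), and your running-time accounting fills in what the paper calls easy. The only slip is that the termination argument should be applied to the $\opt$-minimizer $u$ rather than to each remaining $x$, which your key step already covers.
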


\subsection{The Necessity of Rewinding}
\label{sec:dummy}

In this section, we discuss the role of rewinding in the efficiency of algorithms.
We show that allowing the algorithm to rewind to previous states can exponentially improve the expected run time, compared to algorithms that do not allow rewinding or allow limited forms of rewinding.
Essentially, without rewinding, the algorithm may get \enquote{lost} and never recover.

Consider, as an example, the Markov chain depicted in \cref{fig:dummy-chain}, with $p = \frac{1}{2}$.
The optimal algorithm operates as follows.
At any point, it maintains a state $x_i$, which is the furthest state observed in the path (initially, $x_0$).
It repeatedly rewinds to $x_i$ to generate the next state, until $x_{i+1}$ is obtained.
Since each increase takes $\frac{1}{p}$ generations in expectation,
the algorithm reaches $x_n$ in $\frac{n}{p} = 2n$ steps.

To compare, we consider an algorithm that allows no rewinding.
That is, the algorithm starts at $x_0$ and explores in a single trajectory in hopes of reaching $x_n$.
Observe that in each step, the trajectory either moves one state further down the path, or moves to the dummy state $D$.
In case it moves to $D$, it is absorbed forever, and never reaches the target state $x_n$.
Therefore, the algorithm reaches the target state with a negligible probability of $\frac{1}{2^n}$.

Similarly, we can consider an algorithm with limited backtracking that explores a number of independent trajectories in parallel.
Based on the analysis above, each trajectory reaches the target state with a probability of $\frac{1}{2^n}$.
Hence, an algorithm that explores $k$ trajectories in parallel reaches the target state with a probability of 
\begin{equation*}
1 - \left(1 - \frac{1}{2^n}\right)^k \leq \frac{k}{2^n}.
\end{equation*}
Therefore, any algorithm that reaches the target state with a constant probability must explore at least $\Omega(2^n)$ states, which is exponentially larger than the expected number of steps for the optimal rewinding algorithm, $2n$.
\section{Massive Markov Chains}
\label{sec:noise}

\cref{thm:optimal-non-branching} shows that an optimal strategy can be devised given oracle access to the exact values of $\opt(x)$.
However, such oracle access may be infeasible for large Markov chains, since examining the structure of the entire chain is computationally inefficient.
Therefore, in this section, we consider access to noisy approximations of $\opt(x)$ under two different noise models.

In \cref{sec:laplacian-noise}, we consider a model in which the algorithm can obtain independent approximations of $\opt(x)$ with Laplacian noise, and present a non-branching rewinding strategy with a hitting time comparable to $\opt(x)$. 
In \cref{sec:adversarial-noise}, we show that adversarially chosen $(1 \pm \epsilon)$-approximations of $\opt(x)$ are not sufficient for an efficient algorithm, implying that some assumptions about the noise (such as in the first noise model) are necessary.

\subsection{A Polynomial Algorithm under Laplacian Noise} \label{sec:laplacian-noise}

First, we formalize the noise model.

\begin{definition}[Approximations with Laplacian Noise]
\label{def:laplacian-noise}
For each state $s \in\Omega$, the algorithm can repeatedly obtain independent noisy evaluations of $\opt(s)$:
\[
\oapx(s) \sim \mathrm{Laplace}(\opt(s), \lambda),
\]
which has a Laplacian distribution of mean $\opt(s)$ and scale $\lambda$ (i.e., variance $2\lambda^2$), each evaluation incurs the same runtime cost as generating a state.\footnote{The Laplace distribution has a density function of $f(x) = \frac{1}{2\lambda}e^{\card{x-\mu}/\lambda}$. We remark that the algorithm in this section can be easily adapted to any independent noisy evaluation of variance $\lambda$.}
\end{definition}

Of special interest may be the case where $\sqrt{2} \lambda = \epsilon \opt(s)$, as $\oapx(s) \in (1 \pm \epsilon)\opt(s)$ with constant probability.

Before stating the algorithm for this model, henceforth referred to as the stable algorithm, we describe an auxiliary algorithm which is used in the analysis.
The auxiliary algorithm is non-branching and utilizes exact values of $\opt(x)$, in a manner that enables the stable algorithm to simulate it using only noisy approximations of $\opt(x)$.
As opposed to the optimal algorithm, which replaces the minimizer state $x^*$ with any state that has a smaller optimal hitting time, the auxiliary algorithm replaces $x^*$ only when the new state improves the optimal hitting time non-negligibly.

\begin{definition} \label{def:aux}
    Given an initial state $x_0 \in \Omega$ and a parameter $\epsilon > 0$, the auxiliary algorithm operates as follows:
    \begin{enumerate}
        \item Maintain a special state $x^*$ (serving as an \enquote{approximate minimizer}), initially equal to $x_0$.
        \item Repeatedly rewind to $x^*$ to obtain a state $g$, until the new state satisfies $g = z$ or $\opt(g) < \opt(x) - c$, where $c = \frac{\epsilon}{1+\epsilon}$. \label{step:aux-2}
        \item If $g$ is equal to the target state $z$, halt. Otherwise, let $x^* := g$ and go back to step \ref{step:aux-2}.
    \end{enumerate}
    We use $\opt'(x_0)$ to denote the expected hitting time of $z$ with this algorithm.
\end{definition}

We establish that the hitting time of the auxiliary algorithm is at most $(1 + \epsilon)$ times larger than the optimal hitting time. The proof is deferred to \cref{sec:proofs}.

\begin{restatable}{lemma}{lemmaAuxAnalysis}\label{lem:aux-analysis}
    For any state $x$, it holds that
    $$
    \opt'(x) \leq (1 + \epsilon)\opt(x).
    $$
\end{restatable}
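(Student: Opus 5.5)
The plan is strong induction on states in increasing order of $\opt(\cdot)$. The hypothesis is that $\opt'(y) \le (1+\epsilon)\opt(y)$ for every $y$ with $\opt(y) < \opt(x)$. The base case is $y = z$, where $\opt'(z) = \opt(z) = 0$. States with $\opt(x) = \infty$ satisfy the claim trivially, so I will assume $\opt(x) < \infty$ and $x \neq z$.

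\textbf{Step 1: a recursion for $\opt'$.} Let $L' = \{y : y = z \text{ or } \opt(y) < \opt(x) - c\}$ be the set of states that cause the auxiliary algorithm to move its approximate minimizer away from $x$. From $x$ the auxiliary algorithm keeps rewinding to $x$ until it draws a state of $L'$. The number of draws is geometric with mean $1/P(x,L')$, and the accepted state has distribution $P(x,\cdot)/P(x,L')$ on $L'$. After that it behaves as if started at the new state. Hence, provided $P(x,L') > 0$,
\[
\opt'(x) = \frac{1 + \sum_{y \in L'} P(x,y)\,\opt'(y)}{P(x,L')}.
\]
Every $y \in L'$ has $\opt(y) < \opt(x)$, so the induction hypothesis gives
\[
\opt'(x) \le \frac{1 + (1+\epsilon)\sum_{y\in L'} P(x,y)\opt(y)}{P(x,L')}.
\]
Therefore it suffices to show
\[
(1+\epsilon)\sum_{y \in L'} P(x,y)\bigl(\opt(x) - \opt(y)\bigr) \ge 1.
\]

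\textbf{Step 2: use the exact recursion for $\opt$.} Let $L = \{y : \opt(y) < \opt(x)\}$. Rearranging \cref{cor:opt-recursion} gives
\[
\sum_{y\in L} P(x,y)\bigl(\opt(x)-\opt(y)\bigr) = 1.
\]
Note that $z \in L$, since $\opt(z) = 0 < \opt(x)$. Split $L = L' \cup M$, where $M$ consists of the states with $\opt(x) - c \le \opt(y) < \opt(x)$. Each term indexed by $M$ is at most $c\,P(x,y)$, so the $M$-part contributes at most $c\,P(x,M) \le c$. Consequently
\[
\sum_{y\in L'} P(x,y)\bigl(\opt(x)-\opt(y)\bigr) \ge 1 - c = \frac{1}{1+\epsilon},
\]
which is exactly the inequality required in Step 1. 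Since this sum is positive, we also get $P(x,L') > 0$, so the recursion in Step 1 is well defined and the expected time is finite.

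\textbf{Main obstacle.} The main care point is justifying the recursion for $\opt'$ in Step 1. One must check that conditioning on the first accepted state gives a fresh copy of the auxiliary process from that state, so its remaining expected cost is $\opt'(y)$. This holds because the auxiliary algorithm's behaviour depends only on its current $x^*$, which is the same memoryless argument underlying \cref{cor:opt-recursion}. Finally, the threshold in \cref{def:aux} should be read as $\opt(g) < \opt(x^*) - c$; the constant $c = \epsilon/(1+\epsilon)$ is chosen exactly so that $1 - c = 1/(1+\epsilon)$.
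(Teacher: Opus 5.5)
Your proof is correct and follows the paper's argument: induction in increasing order of $\opt$, the recursion for $\opt'$ on the threshold set, the exact recursion of \cref{cor:opt-recursion} split into the middle band $M$ and the set $L$, and the bound $c\,P(x,M) \le c$ on the middle band. Rewriting \cref{cor:opt-recursion} as $\sum_{y \in L} P(x,y)(\opt(x) - \opt(y)) = 1$ is a tidier form of the paper's rearrangement, and it also explicitly establishes $P(x,L') > 0$ and handles $\opt(x) = \infty$, both of which the paper uses tacitly when it divides by $P(x,L)$.
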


Now, we move on to the stable algorithm. 
First, we note that the mean-median technique can be used to strengthen the approximation guarantee of $\oapx(\cdot)$ (as defined in \cref{def:laplacian-noise}).
The proof is deferred to \cref{sec:proofs}.

\begin{restatable}{lemma}{meanMedian}\label{lem:mean-median}
    For any state $s \in \Omega$ and parameters $\epsilon$ and $\delta$, 
    given access to estimations $\oapx(s) \sim \mathrm{Laplace}(\opt(s), \lambda)$,
    the algorithm can use $O(\frac{\lambda^2}{\epsilon^2} \log \frac{1}{\delta})$ estimates to obtain a value $X_s$ such that 
    $
    \card{X_s - \opt(s)} \leq \epsilon,
    $
    with probability $\delta$.
\end{restatable}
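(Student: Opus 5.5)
We use the standard median-of-means estimator. Note that the statement should read ``with probability at least $1-\delta$''; that is the guarantee we aim for. Set $k = \lceil 8\lambda^2/\epsilon^2 \rceil$ and $m = \lceil C \log \frac{1}{\delta} \rceil$ for a suitable absolute constant $C$. We draw $km$ independent evaluations $\oapx(s)$ and split them into $m$ groups of size $k$. Let $Y_1, \dots, Y_m$ be the group means, and output $X_s = \mathrm{median}(Y_1,\dots,Y_m)$. The total number of estimates is $km = O(\frac{\lambda^2}{\epsilon^2}\log\frac{1}{\delta})$, matching the claimed budget.

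\textbf{Step 1: a single group mean.} Each $\oapx(s)$ has mean $\opt(s)$ and variance $2\lambda^2$. By independence, $Y_j$ has mean $\opt(s)$ and variance $2\lambda^2/k \le \epsilon^2/4$. Chebyshev's inequality then gives
\[
\Pr\bigl[\card{Y_j - \opt(s)} > \epsilon\bigr] \le \frac{2\lambda^2}{k\epsilon^2} \le \frac14 .
\]
This step uses only the variance, consistent with the footnote's remark that any noise of bounded variance suffices.

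\textbf{Step 2: the median.} If $\card{X_s - \opt(s)} > \epsilon$, then at least half of the $Y_j$ lie outside $[\opt(s)-\epsilon, \opt(s)+\epsilon]$. Let $B_j$ indicate the event $\card{Y_j - \opt(s)} > \epsilon$. The $B_j$ are independent Bernoulli variables, each with mean at most $1/4$, since the groups use disjoint samples. By a Chernoff--Hoeffding bound,
\[
\Pr\Bigl[\textstyle\sum_j B_j \ge m/2\Bigr] \le e^{-2m(1/4)^2} = e^{-m/8},
\]
which is at most $\delta$ once $C = 8$.

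The main point needing care is Step 2: the median fails only if a majority of the groups fail. This turns the constant failure probability from Chebyshev into exponential concentration in $m$, and that gives the $\log\frac1\delta$ dependence. Everything else is routine bookkeeping of constants.
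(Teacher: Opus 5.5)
Your proof is correct and takes essentially the same median-of-means route as the paper: Chebyshev on each group mean, then amplification through the median. The only differences are in bookkeeping. The paper uses groups of $32\lambda^2/\epsilon^2$ samples (per-group failure at most $1/16$) and bounds the median's failure by a union bound over at most $\binom{N}{\lfloor N/2\rfloor}\le 2^N$ subsets, whereas you use per-group failure $1/4$ with Hoeffding. Your reading of the guarantee as success with probability at least $1-\delta$ matches what the paper's proof actually establishes.
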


With the mean-median subroutine at hand, we are ready to present the stable algorithm (\cref{alg:stable}).

\begin{algorithm}
    \caption{Stable Algorithm (for noisy approximations)}
    \label{alg:stable}
    \begin{algorithmic}

    \STATE \textbf{Input:}
    A Markov chain $M = (\Omega, P)$, a target state $z \in \Omega$, an upper bound $N$ for $\opt(x_0)$, and a Laplacian approximator of $\oapx(\cdot)$ of the hitting times
    
    \STATE $x^* \gets x_0$

    \WHILE{$x^* \neq z$}
        \STATE $g \gets $ next state drawn from $x^*$
        
        \STATE $X_{x^*}, X_g \gets$ mean-median approximations obtained from $\oapx$ (\cref{lem:mean-median}), with parameters $\epsilon = \frac{1}{10}$ and $\delta = \frac{1}{10 N}$
        
        \IF{$X_g < X_{x^*} - \frac{1}{2}$}
            \STATE $x^* \gets g$
        \ENDIF
    \ENDWHILE
    \end{algorithmic}
\end{algorithm}

\begin{restatable}{theorem}{thmStable}\label{thm:stable}
    Given an initial state $x_0$, \cref{alg:stable} reaches the target state with an expected computation cost (number of steps plus number of approximations) of at most $O(\lambda^2 \cdot \opt(x_0) \log \opt(x_0))$.
    In particular, when $\oapx(s) \sim \mathrm{Laplace}\left(\opt(s), \epsilon \opt(s)\right)$,
    the total cost is at most
    $O(\epsilon^2 \cdot \opt(x_0)^3 \log \opt(x_0))$, in expectation.
\end{restatable}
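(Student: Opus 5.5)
We will show that \cref{alg:stable} simulates the auxiliary algorithm of \cref{def:aux} with a constant threshold. Then we bound the number of generated states via \cref{lem:aux-analysis}, and multiply by the per-step evaluation cost from \cref{lem:mean-median}.

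\textbf{Step 1 (a good event).} Call an iteration \emph{good} if both mean-median estimates are accurate, i.e.\ $\card{X_{x^*}-\opt(x^*)}\le \frac{1}{10}$ and $\card{X_g-\opt(g)}\le\frac1{10}$. By \cref{lem:mean-median} with $\delta=\frac{1}{10N}$, a fixed iteration fails with probability at most $\frac{2}{10N}$.

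\textbf{Step 2 (behaviour in good iterations).} In a good iteration the test $X_g<X_{x^*}-\frac12$ has two consequences. It accepts every $g$ with $\opt(g)<\opt(x^*)-\frac{7}{10}$. Every $g$ it accepts satisfies $\opt(g)<\opt(x^*)-\frac{3}{10}$. Hence $\opt(x^*)$ strictly decreases, by at least $\frac{3}{10}$, at each replacement. Moreover, while $x^*$ stays fixed, each draw succeeds with probability at least the probability that the auxiliary rule with threshold $c=\frac{7}{10}$ succeeds. Here I will treat $z$ specially: $\opt(z)=0$, and I add the rule that the algorithm halts on drawing $z$.

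We must then compare with an auxiliary-type algorithm whose threshold lies anywhere in $[\frac3{10},\frac7{10}]$, possibly changing adaptively. I plan to redo the argument of \cref{lem:aux-analysis} with the slack chosen as a constant. Via the recursion in \cref{cor:opt-recursion}, I expect this to give hitting time $O(\opt(x_0))$, since in the auxiliary analysis the threshold $c$ bounded away from $1$ corresponds to constant $\epsilon$. Concretely, I would prove by induction on $\opt$ values that the expected number of generated states from $x^*$ is at most $C\cdot\opt(x^*)+C'$. The induction step uses that the rejected mass only consists of states with $\opt(g)\ge \opt(x^*)-\frac7{10}$, and it telescopes using the recursion.

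\textbf{Step 3 (bad iterations).} The number of iterations in a run is $O(\opt(x_0))$ in expectation on the good event. Take $N$ to be within a constant factor of $\opt(x_0)$; I will assume this, or guess $N$ by doubling. A union bound then shows that all iterations within the first $O(N)$ are good with constant probability. On failure, the state $x^*$ may be replaced by a worse one, raising $\opt(x^*)$. I would handle this by a restart-style argument: consider epochs of $\Theta(N)$ iterations. Each epoch, by Markov's inequality together with the union bound, reaches $z$ with constant probability. This gives $O(N)$ expected iterations overall. This step is the main obstacle, since a bad replacement can move to a state with huge $\opt$. I would try to argue that the epoch analysis restarts from the current $x^*$ rather than from $x_0$, which requires a bound relative to $\opt(x^*)$ that could be large. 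The fallback is to restrict replacements to never increase the estimate and accept a logarithmic loss.

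\textbf{Step 4 (cost).} Each iteration uses one generation plus $O(\lambda^2\log N)$ evaluations. Multiplying by $O(\opt(x_0))$ iterations gives $O(\lambda^2\opt(x_0)\log\opt(x_0))$. For $\lambda=\epsilon\opt(s)$, we have $\opt(s)\le\opt(x_0)+O(1)$ along good runs, so $\lambda^2\le \epsilon^2\opt(x_0)^2$, which yields the second bound.
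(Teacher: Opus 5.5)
Your Steps 1, 2 and 4 are sound. Step 2 takes a different route from the paper: instead of comparing \cref{alg:stable} with the auxiliary algorithm of \cref{def:aux}, you run the induction of \cref{lem:aux-analysis} directly on the stable algorithm. This works with $C=\frac{10}{3}$ and $C'=0$. Let $Q(x,y)\in[0,1]$ be the acceptance probability; it equals $1$ below $\opt(x)-\frac{7}{10}$ and $0$ at or above $\opt(x)-\frac{3}{10}$. The inductive step needs $1\le C\sum_y P(x,y)Q(x,y)\bigl(\opt(x)-\opt(y)\bigr)$. By \cref{cor:opt-recursion}, $\sum_{y:\opt(y)<\opt(x)}P(x,y)\bigl(\opt(x)-\opt(y)\bigr)=1$. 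States with $\opt(y)\in[\opt(x)-\frac{7}{10},\opt(x))$ contribute at most $\frac{7}{10}$ to this sum, so the always-accepted states contribute at least $\frac{3}{10}$, which is exactly what is needed. Your thresholds $\frac{3}{10},\frac{7}{10}$ are also the ones actually implied by $\pm\frac{1}{10}$ accuracy.

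The genuine gap is Step 3, at exactly the point you flag. An epoch argument needs each epoch to start from a state whose $\opt$ is $O(N)$, and continuing from the current $x^*$ gives no such bound. \cref{lem:mean-median} says nothing about the size of the error on its failure event, so a single inaccurate comparison can move $x^*$ to a state with arbitrarily large $\opt$. \cref{alg:stable} has no mechanism to undo this. Your fallback does not help either: the algorithm already replaces $x^*$ only when the fresh estimate drops. The damage comes from the \emph{true} value $\opt(x^*)$ rising because an estimate was wrong, and no rule phrased purely in terms of estimates can prevent that.

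The missing idea, which the paper uses, is to modify the algorithm so that each epoch restarts from $x_0$. If $z$ has not been reached within $4N$ steps, discard the run and reset $x^*\gets x_0$. This is always possible, since $x_0$ is an observed state and the model allows rewinding to it. Runs are then independent trials from the same start:
\begin{enumerate}
\item A run makes at most $8N$ mean-median estimates, so all of them are accurate with probability at least $1-8N\cdot\frac{1}{10N}=\frac15$.
\item The estimation noise is independent of the chain's transitions, so conditioned on this event your Step 2 bound still applies. It gives expected length at most $\frac{10}{3}\opt(x_0)\le\frac{10}{3}N$.
\item By Markov's inequality, such a run reaches $z$ within $4N$ steps with probability at least $\frac16$.
\end{enumerate}
Hence $O(1)$ runs, i.e.\ $O(N)$ steps, suffice in expectation, and your Step 4 completes the bound.
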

\subsection{A Lower Bound for Adversarial Noise} \label{sec:adversarial-noise}

In this section, we present a lower bound for adversarial noise in partially observable (i.e., massive) Markov chains, showing that access to arbitrary $(1 \pm \epsilon)$-approximations of $\opt(x)$ is not sufficient for reaching the target state efficiently, when the errors are set arbitrarily by an adversary.
That is, we construct a partially observable Markov chain $M = (\Omega, P)$, with an initial state $x_0 \in \Omega$ and a target state $z$. 
For any generated state $x$, the algorithm can only observe $\oapx(x)$, which is an adversarially set value within $(1 \pm \epsilon)\opt(x)$.
We show that to reach the target state with constant probability, the algorithm must generate at least $\exp(\opt(x_0))$ states. 

\begin{restatable}{theorem}{thmLB} \label{thm:lb}
    Given a parameter $\epsilon < 1$, there exists a partially observable Markov chain $M = (\Omega, P)$, with observations $\oapx(x) \in  (1 \pm \epsilon)\opt(x)$ for any state $x$, such that any algorithm that can only access the observations requires $\exp( \opt(x_0) )$ steps to reach the target, where $\opt(x_0)$ can be taken arbitrarily large.
\end{restatable}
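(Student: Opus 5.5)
The plan is to build a ``needle in a haystack'' chain. Finding the target requires guessing a hidden path. The chain has a large additive offset, so every state visited before the needle is found has essentially the same value of $\opt(\cdot)$. An adversary can then report one constant for all of these states, and the observations carry no information.

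\textbf{Construction.} Fix $k=2$ (any $k \ge 2$ works), a depth $n$, and a hidden string $\sigma \in [k]^n$.
\begin{itemize}
\item The states include all strings $s$ over $[k]$ of length at most $n$, with the root being the empty string $x_0$.
\item From a string $s$ of length less than $n$, the chain moves to $s\cdot j$ for $j$ chosen uniformly from $[k]$.
\item From a leaf $s \ne \sigma$ of length $n$, the chain resets deterministically to $x_0$.
\item From the leaf $\sigma$, the chain enters a deterministic path $y_1 \to y_2 \to \cdots \to y_L = z$ of length $L$. This tail is the additive offset.
\end{itemize}

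\textbf{Computing $\opt$.} I would use \cref{cor:opt-recursion}, or \cref{alg:opt} directly, to compute $\opt$ exactly.
\begin{itemize}
\item For the on-path node $\sigma_{\le i}$ at depth $i$: the optimal non-branching algorithm resamples each step until it hits the correct child. This gives $\opt(\sigma_{\le i}) = k(n-i) + L$.
\item For an off-path node at depth $i$: every descendant leaf resets to the root, so the walk costs $n-i$ steps down, then $1$ reset step, then $\opt(x_0)$. I still need to check with \cref{cor:minimizer} that rewinding inside the dead subtree cannot help; it cannot, because all continuations pass through a reset. This gives $\opt = (n-i)+1+kn+L$.
\end{itemize}
Hence every tree state has $\opt \in [L,\, L + (k+1)n + 1]$. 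Choosing $L \ge 2(k+1)n/\epsilon$ makes the constant report $\oapx(s) = L(1+\epsilon/2)$ (or simply $L+ (k+1)n$) lie within $(1\pm\epsilon)\opt(s)$ for every tree state. For the tail states $y_j$, the adversary may report exact values. This leaks nothing, since the tail is reachable only after $\sigma$ has already been found. Also $\opt(x_0) = kn + L = \Theta(n/\epsilon)$, which can be made arbitrarily large by increasing $n$.

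\textbf{Lower bound.} I would apply Yao's principle with $\sigma$ uniform over $[k]^n$. Every observation outside the tail is a fixed constant independent of $\sigma$, and the tree transitions are independent of $\sigma$ except at the leaf $\sigma$ itself. Therefore, until the algorithm first generates $y_1$, its behavior is a fixed randomized process independent of $\sigma$.
\begin{itemize}
\item Each generated state reveals at most one new leaf.
\item Consequently, after $T$ steps the probability of having generated $\sigma$ is at most $T/k^n$.
\item Reaching $z$ with constant probability therefore needs $T = \Omega(k^n) = \exp(\Omega(\epsilon\,\opt(x_0)))$ steps. For fixed $\epsilon$ this is $\exp(\Omega(\opt(x_0)))$, as claimed.
\end{itemize}
One subtlety: the theorem's $\exp(\opt(x_0))$ should be read up to constants in the exponent. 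If the dependence on $\epsilon$ must vanish, I would instead take $k \approx 1/\epsilon$ and rebalance $L$.

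\textbf{Main obstacle.} The step I expect to need the most care is pinning down $\opt$ at off-path nodes. It must be rigorously bounded above by $L+O(kn)$ so that the multiplicative window covers it, which uses \cref{cor:minimizer} on the reset structure. The indistinguishability argument is otherwise routine.
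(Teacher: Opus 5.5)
Your proof is correct, and it takes a genuinely different route from the paper's.

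\textbf{The paper's construction.} The paper runs the undirected random walk on a $\Delta$-regular tree of depth $10d$, where $\opt(x)=\Delta\cdot\dist(x,z)$ follows from the caterpillar characterization. It hides a special middle node among the $\Delta^{\epsilon d}$ nodes at depth $\epsilon d$. Its observation scheme has three groups: depth-based values near the root, exact values inside the special subtree, and ``fake branch'' values elsewhere. As a result, only making an observation inside a middle node's subtree reveals whether that node is special. The multiplicative slack comes from the total depth $10d$ dwarfing the $O(\epsilon d)$ uncertainty.

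\textbf{Your construction.} You make the tree one-way with resets and append a deterministic tail of length $L=\Theta(n/\epsilon)$. Every pre-needle state then has $\opt$ inside a window of relative width below $\epsilon$, so a single constant report suffices. Indistinguishability becomes trivial: the only informative event is rewinding to $\sigma$ itself, and coupling with the reset-everywhere chain gives the $T/k^n$ bound directly.

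\textbf{Trade-offs.} Your route gives a simpler observation design and a better exponent, $\exp(\Omega(\epsilon\,\opt(x_0)))$. The paper's choice $\Delta=2^{1/\epsilon}$ gives $2^{d}$ with $\opt(x_0)=10d\cdot 2^{1/\epsilon}$, i.e.\ exponent $\opt(x_0)/(10\cdot 2^{1/\epsilon})$. The paper's route gives a natural chain with no artificial tail, and hardness persists even though observations are exact on the entire target subtree.

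\textbf{The step you flagged goes through.} Suppose some off-path $w$ minimized $\opt$ over all off-path states with $\opt(w)<\opt(x_0)$. Any run from $\{w\}$ must take at least one step before observing anything outside $w$'s subtree, and the first such state is necessarily $x_0$. At that moment the observed set has value at least $\opt(w)$ by the minimizer corollary, which gives $\opt(w)\ge 1+\opt(w)$, a contradiction. So every off-path state has $\opt\ge\opt(x_0)$, and the exact value $(n-i)+1+\opt(x_0)$ follows.

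\textbf{A correction to your closing remark.} Taking $k\approx 1/\epsilon$ makes the $\epsilon$-dependence worse, not better. The window constraint forces $L\gtrsim (k+1)n/\epsilon$, hence $n\ln k\lesssim \epsilon\,\opt(x_0)\ln k/k$, which is largest at constant $k$. This does not affect your proof, since the paper's own bound is likewise only $\exp(\Omega_\epsilon(\opt(x_0)))$.
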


Here, we describe the Markov chain and defer the proofs to \cref{sec:proofs}. 
To do so, we use a $\Delta$-regular tree of depth $10 d$.
Here, $\Delta$ is a sufficiently large constant, and $d$ is taken as large as desired to increase $\opt(x_0)$.
The root of the tree, which has degree $\Delta$, corresponds to the initial state $x_0$.
Each descendant of $x_0$ has $\Delta - 1$ children.
This continues up to a depth of $10 d$, where each descendant of depth $10 d$ is a leaf and has no children.
Each vertex in the tree corresponds to a state in the Markov chain.
For the transition probabilities, we let any non-leaf state transition to its neighbors in the tree with uniform probability.
A leaf vertex transitions to its parent with probability $\frac{1}{\Delta}$, and remains in the same state otherwise.
Finally, we choose one of the leaves as the target state $z$.
Before presenting the observations $\oapx(x)$, we calculate the optimal hitting times $\opt(x)$ for each state.

\begin{restatable}{lemma}{lemmaTreeOpt} \label{lem:lb-opt}
    For every state $x$, it holds that $\opt(x) = \Delta \cdot \dist(x, z)$, where $\dist(x, z)$ denotes the length of the shortest path between $x$ and $z$ in the tree.
\end{restatable}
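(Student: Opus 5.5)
We prove that $\opt(x)=\Delta\cdot\dist(x,z)$ by verifying that the candidate $h(x):=\Delta\cdot\dist(x,z)$ satisfies the recursion of \cref{cor:opt-recursion}, and that this recursion, together with $h(z)=0$, forces $h=\opt$.

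\textbf{Computing the transition mass toward $z$.} For any state $x\neq z$, exactly one neighbor of $x$ in the tree lies on the shortest path from $x$ to $z$; call it $y_x$, so that $\dist(y_x,z)=\dist(x,z)-1$. Every other neighbor $w$ has $\dist(w,z)=\dist(x,z)+1$, and a self-loop keeps the distance unchanged. We next check that $P(x,y_x)=\frac1\Delta$ in all cases.
\begin{itemize}
\item If $x$ is the root, it has $\Delta$ neighbors, each chosen uniformly.
\item If $x$ is an internal non-root vertex, it has $\Delta-1$ children plus a parent, so again $\Delta$ neighbors chosen uniformly.
\item If $x$ is a leaf other than $z$, its only way to get closer to $z$ is to move to its parent, which happens with probability $\frac1\Delta$; otherwise it stays put.
\end{itemize}

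\textbf{Upper bound.} The non-branching algorithm that repeatedly rewinds to the closest-to-$z$ state observed so far advances one level per geometric trial of success probability $\frac1\Delta$. Its expected hitting time is therefore exactly $\Delta\cdot\dist(x,z)$, which gives $\opt(x)\le h(x)$.

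\textbf{Lower bound.} We argue by induction on $k=\dist(x,z)$ that $\opt(x)\ge\Delta k$. The approach mirrors the correctness argument of \cref{alg:dijkstra}.

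Suppose the claim holds for all states at distance less than $k$, and that $\opt(y)=\Delta\dist(y,z)$ for those states. Let $x$ be at distance $k$, and let $L=\{y:\opt(y)<\opt(x)\}$. By \cref{cor:opt-recursion},
$$\opt(x)=\frac{1+\sum_{y\in L}P(x,y)\opt(y)}{P(x,L)}.$$
The only way to make the right-hand side small is to include cheap states in $L$. The support of $P(x,\cdot)$ consists of $y_x$, whose value is $\Delta(k-1)$ by induction, together with states at distance $\ge k$, whose values we must also bound from below.

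To avoid circularity, the cleanest route is a potential-function argument. Along any trajectory, $\dist(\cdot,z)$ changes by at most $1$ per step, and from any state the probability of decreasing it is at most $\frac1\Delta$. Hence the minimum distance over the observed set $S$ decreases by at most one per step, and only with probability at most $\frac1\Delta$ regardless of which state the algorithm rewinds to.

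Reaching $z$ therefore requires $k$ such successes, each occurring with probability at most $\frac1\Delta$ per step. A standard optional-stopping / Wald argument, applied to the supermartingale
$$\Delta\cdot\min_{s\in S_t}\dist(s,z)+t,$$
yields an expected time of at least $\Delta k$.

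\textbf{Main obstacle.} The delicate step is confirming that this process is indeed a supermartingale, i.e., that the distance-decrease probability is at most $\frac1\Delta$ from every state. This holds for the root, for internal vertices and for leaves, as checked above. Combined with \cref{cor:minimizer}, which reduces $\opt(S)$ to the minimum over single states, this completes the proof.
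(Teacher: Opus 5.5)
Your argument is correct, but it takes a different route from the paper.

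\textbf{The two routes.} The paper invokes \cref{thm:optimal-non-branching}. It notes that every path from $x$, or from a farther neighbor, to $z$ passes through the closer neighbor $y^*$. From this it concludes that the minimizer in \cref{alg:opt} advances exactly along the shortest path, and then computes the cost of that single algorithm as $\Delta$ expected draws per level. You get the upper bound from the same greedy algorithm. For the matching lower bound, you argue directly against every rewinding algorithm: the quantity $D_t=\min_{s\in S_t}\dist(s,z)$ drops by at most one per step, and drops with conditional probability at most $1/\Delta$ whatever state is rewound to.

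\textbf{What each buys.} Your route is self-contained and never uses the structural theorem, so the closing appeal to \cref{cor:minimizer} is superfluous. It also avoids the step the paper handles informally, namely that $\opt$ is strictly monotone in the distance to $z$. The paper's inequalities $\opt(x)\ge\opt(y^*)$ and $\opt(y)\ge\opt(y^*)$ do not by themselves rule out a farther neighbor being accepted as the new minimizer. You also treat leaves explicitly. The paper's route is shorter given its machinery, since only one algorithm needs to be analyzed.

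\textbf{Two fixes.} First, $\Delta D_t+t$ is a submartingale, not a supermartingale. Your bound on the decrease probability gives $\mathbb{E}[\Delta D_{t+1}+t+1\mid\mathcal{F}_t]\ge \Delta D_t+t$. The submartingale form of optional stopping is what yields $\mathbb{E}[\tau]\ge\Delta k$: apply it at $\tau\wedge n$, use $0\le D_t\le k$, and let $n\to\infty$; the case $\mathbb{E}[\tau]=\infty$ is trivial. Applied to a supermartingale, optional stopping would give the reverse inequality.

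Second, two pieces of your write-up are never used. One is the opening plan of showing that the recursion of \cref{cor:opt-recursion} has a unique solution. The other is the induction hypothesis you set up at the start of the lower bound. Delete both, so the proof consists only of the explicit upper bound and the potential-function lower bound.
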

\vspace{-2.5mm}

The observations (with arbitrary noise) are set as follows.
Let $\P = (x_0, x_1, \ldots, x_{10d})$ be the path that starts at the root $x_0$ and ends at the target state $x_{10d}= z$.
We divide the states into three groups:
(1) those within a distance of $\epsilon d$ from the root;
(2) those in the subtree of $x_{\epsilon d}$; and
(3) everything else, i.e., states with a depth strictly larger than $\epsilon d$, that are not in the subtree of $x_{\epsilon d}$.
For a state $s$ in the first group, we let the observation $\oapx(x)$ be equal to $\Delta \cdot (10d - d_s)$, where $d_s$ is the depth of $s$ (its distance from the root).
That is, the true hitting time as if $s$ lied on the path from $x_0$ to $z$.
For the second group, we let $\oapx(s) := \Delta \cdot \dist(s, z)$, i.e., the observation is set without error to the optimal hitting time.
Finally, for the third group, we let $\oapx(s) := 10d - 2\epsilon d + d_s$.
That is, set to the hitting time as if $s$ were in the subtree of $x_{\epsilon d}$ but not in the subtree of $x_{\epsilon d + 1}$ (i.e., a node with depth $d_s$ that branches off $\P$ at $x_{\epsilon d}$).
It can be easily confirmed that these observations are within a $(1 \pm \epsilon)$-multiplicative error of the optimal hitting time (as characterized in \cref{lem:lb-opt}).

To prove \cref{thm:lb}, in \cref{sec:proofs}, we show that any algorithm that reaches the target state with constant probability using only the observations, must explore a constant fraction of the states in the first group (the vertices within distance $\epsilon d$ of $x_0$).
That is, it must take at least about $\Delta^{\epsilon d}$ steps which is exponentially larger than the optimal hitting time $10 \Delta d$.
\section{Experiments}
\label{sec:experiments}

\vspace{-2mm}
In this section, we evaluate a practical variant of our optimal rewinding algorithm (\cref{alg:opt}) that is better suited for practical settings, where the exact number of optimal steps from a state is not available. 

Our experiments and all the compared methods use the same underlying language model (GPT-4), the same prompting strategy for child generation, and comparable evaluator prompts.
We used ToT as the base code, and implemented CaT. The implementations are available on \href{https://github.com/almaghafari/caterpillar-of-thoughts}{github}.

\subsection{The Algorithm}
\vspace{-2mm}

For our experiments, we implement a variant of the optimal algorithm \cref{alg:opt} that instead of selecting the ``best'' explored state deterministically, selects one via the {\em softmax} distribution. This modification is more suitable for our empirical studies, as the optimal hitting time $\opt(x)$ from a given state $x$ is not readily available and can only be approximated. More formally, the distribution is based on $\oapx(x)$, an estimator of $\opt(x)$, and prioritizes states with smaller estimated hitting times. See \Cref{alg:softmax-rewinding}.

\begin{algorithm}[h]
\caption{Caterpillar of Thoughts (via softmax)}
\label{alg:softmax-rewinding}
 \begin{algorithmic}
    \STATE {\bfseries Input:} Initial state $x_0$, target state $z$, and hitting time estimator $\oapx(\cdot)$

    \STATE $S \gets \{x_0\}$

\WHILE{$z \notin S$}

    \STATE Sample a parent state $x^* \in S$ according to the softmax distribution
    \[
    \Pr[x^*] \propto \exp(- \oapx(x^*))
    \]

    \STATE Sample a child state $g \sim P(x^*,\cdot)$

    \STATE $S \gets S \cup \{g\}$.
    
\ENDWHILE
 \end{algorithmic}
\end{algorithm}

In each iteration, \Cref{alg:softmax-rewinding} rewinds to a previously observed state and expands it by sampling new child states according to the underlying transition probabilities.
The use of a softmax distribution heavily biases exploration toward states that are estimated to be closer to target, while still allowing exploration over the entire observed set.

We note that the main advantage of \cref{alg:softmax-rewinding} compared to the non-branching algorithm of \cref{thm:optimal-non-branching} is robustness to estimation error.
Without exact access to $\opt$, a non-branching algorithm may commit early to an incorrectly evaluated state $x$ and get stuck exploring a suboptimal subtree rooted at $x$, with no mechanism to recover from this mistake.
In contrast, \cref{alg:softmax-rewinding} allows the set of observed states to occasionally branch, preventing the algorithm from being confined to a single trajectory.
At the same time, the rewinding rule ensures that exploration remains exponentially biased toward states with smaller estimated rewinding hitting times. 

\subsection{Task: Game of 24}

In the \emph{Game of 24}, the model is given four integers and must construct a valid arithmetic expression using the operations $\{+,-,\times,/\}$ that evaluates exactly to $24$. Each number must be used exactly once, and intermediate results must be valid real numbers.
A solution is naturally constructed incrementally by applying operations step by step, which makes the task suited for modeling reasoning as a search process over partial solutions. For example, given input "4 4 6 8", a solution output
could be "(6 - 4) * (4 + 8) = 24".

\paragraph{The Markov Chain:}
A state corresponds to a \emph{partial solution}, consisting of
 the sequence of arithmetic operations applied so far, and
     the resulting multiset of remaining values.
From a given state, transitions correspond to sampling a valid next arithmetic operation and two of the remaining values using the language model, producing a new partial solution.
For example given input ``4 4 6 8", a valid state is represented as 
``4 + 8 = 12 (left: 4 6 12)"
or
``4 + 8 = 12 (left: 4 6 12), 
6 - 4 = 2 (left: 2 12)".
Any state representing a complete arithmetic expression that evaluates to $24$ and uses the input numbers each exactly once, is considered a target state.
\begin{figure}[t]
    \centering
    \includegraphics[width=0.65\linewidth]{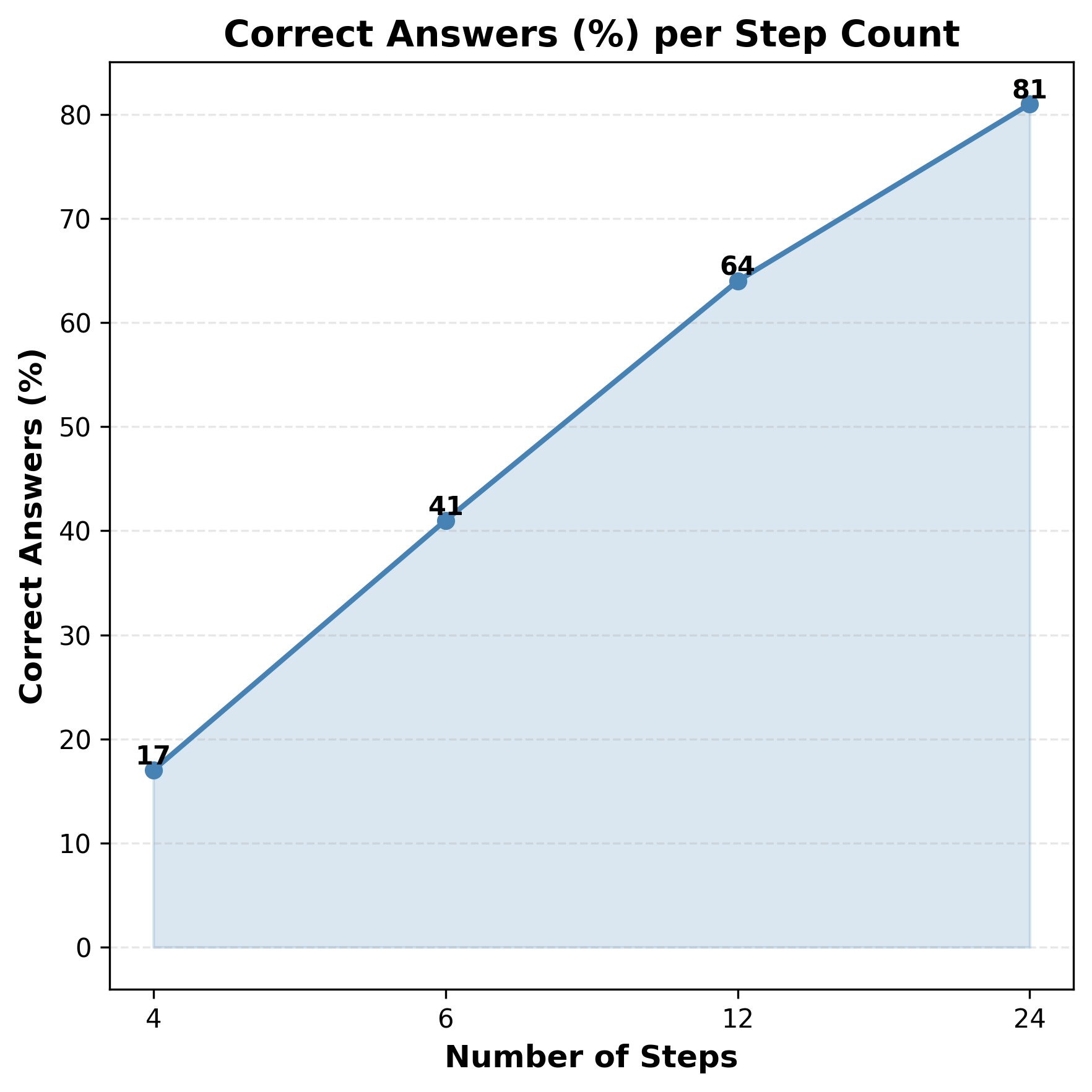}
    \caption{Illustration shows log linear relation between number of valid expressions and number of steps.
    }
    \label{fig:steppercor}
\end{figure}

\textbf{Data:} We use data from 4nums\footnote{\href{https://www.4nums.com/game/difficulties/}{4nums.com}}, the same data as \cite{yao2023treethoughtsdeliberateproblem} which has 1,362 games that are sorted from easy to hard, and use a subset of relatively hard games indexed 901-1,000 for testing. We report the success rate across 100 games as the metric.
\begin{table*}[th!]
\begin{minipage}{1\textwidth}
\caption{Performance on the $5\times 5$ crossword benchmark: cost matched ToT vs.\ CaT.}
\label{tab:crossword}

\centering
\begin{tabular}{lcccc}
\toprule
Method & Word Acc. (\%) & Letter Acc. (\%) & Games Solved (\%) & Avg. Tokens \\
\midrule
ToT \cite{yao2023treethoughtsdeliberateproblem}  & 39.5 & 64.8 & 5  &
73.4k \\
\midrule
\textbf{CaT} (\textbf{This paper}) & \textbf{50.0} & \textbf{68.6} & \textbf{15} & \textbf{66.8k} \\
\bottomrule
\end{tabular}
\end{minipage}
\end{table*}
\textbf{Algorithm Implementation}
In our experimental implementation, we use a  version of \cref{alg:softmax-rewinding} with bounded number of iterations.
 We fix a maximum number of iterations as a hyperparameter,
we execute the algorithm until a target state is reached, or terminate the algorithm once this budget of maximum iterations is exhausted.
The output is considered successful if a valid solution state is discovered within this budget.
In contrast, the BFS like algorithm of ToT, cannot terminate when reaching the target, since ToT prioritizes breadth instead of depth, and to hit the target needs to reach a specific depth and exhaustively process each layer.

In \cref{alg:softmax-rewinding}, at each iteration, after sampling a parent state according to the softmax distribution with temperature 1, we expand the selected state by sampling multiple child states.
The number of children sampled from a parent is not fixed a priori and is instead determined by the language model, following the same protocol used in Tree of Thoughts.
This allows the model to adaptively propose multiple candidate next steps from a given partial solution.

For each observed state $s$, we estimate $\opt(s)$ using the language model itself.
Concretely, the model is prompted to assess how likely the current partial solution is to lead to a correct final answer. The resulting scalar score is treated as a noisy estimate $\oapx(s)$ of the optimal remaining cost to go from $s$.

For the proposed algorithm, we run two independent executions (best of $2$), each with a fixed iteration budget.
We report success if either of the two runs produces a valid solution.
This design choice accounts for the stochastic nature of the rewinding process.
While \cref{alg:softmax-rewinding} significantly reduces the risk of committing to an early incorrect state, the algorithm may still, concentrate exploration on an unproductive region of the state space.
Running multiple independent instances avoids this effect without increasing the depth or branching factor within any single run.
In contrast, Tree of Thoughts performs a more exhaustive and structured tree expansion within a single execution.
Using best of $2$ allows for a fairer comparison between the two algorithms, while keeping average token used less than Tree-of-Thoughts.

\textbf{Results:}
\Cref{tab:game24-results} summarizes performance on the Game of 24 benchmark.
\cref{alg:softmax-rewinding} gets a 81\% accuracy in Game of 24 while reducing token usage to $15.3k$ tokens.
In \cref{fig:steppercor}, we illustrate how many of the inputs were solved per new iterations.

\vspace{-2mm}
\subsection{Task: Crossword}
We evaluate our method on a $5\times 5$ crossword completion benchmark.
Each instance consists of $10$ clues whose answers must be placed on a shared $5\times 5$ grid, with $5$ entries oriented horizontally and $5$ entries oriented vertically.
We report three metrics: (i) \emph{letter accuracy}, measured over the $25$ grid cells; (ii) \emph{word accuracy}, measured over the $10$ entries; and (iii) \emph{game success}, the fraction of instances for which the entire crossword is solved.

\textbf{The Markov Chain:}
We model the search process as a Markov chain that allows rewinding over partially completed boards.
A state is a partially filled crossword grid (equivalently, a set of assigned clue answers that are mutually consistent).
A transition selects one clue and proposes an answer using the language model; if the proposed entry is compatible with all currently placed letters at intersecting cells, it is inserted into the grid, producing a new (more complete) state. Incompatible proposals are rejected (the state remains unchanged).
 
\textbf{Data:}
We use the $5\times 5$ crossword dataset from GooBix,\footnote{\href{https://www.goobix.com/crosswords/0505/}{https://www.goobix.com/crosswords/0505/}}
which is the same source used in the Tree-of-Thoughts (ToT) experiments.
The dataset contains $156$ crossword instances.
Following the ToT setup, we evaluate on $20$ held-out puzzles with indices $1,6,\dots,96$, and we use the same subset as ToT for in-context prompting.

\textbf{Implementation:}
For evaluation, the model is prompted to assess how likely an answer to a clue will be compatible with the given the partially completed board. We score each partially filled crossword by adding the scores of each filled word in that table. The resulting scalar score is treated as a noisy estimate $\oapx(s)$ of the optimal remaining cost to go from $s$.
 Using these scores we generate the corresponding softmax distribution, and sample partially completed boards. Given the sampled board, we generate new boards by sorting the scores for each remaining hint, and picking top three. Note that for scoring hints we use the same prompts used in ToT.
In our implementation for this task, per input we spend 20 iterations of \cref{alg:softmax-rewinding}.

\textbf{Results:}
\Cref{tab:crossword} compares our approach (CaT) against a truncated version \footnote{This is from reports of \cite{chi-etal-2025-thoughtsculpt} on truncated ToT} of ToT with 20 search steps instead of 100 steps reported by ToT.
Across all metrics, CaT achieves higher accuracy and solves more puzzles while using fewer tokens on average.

\section*{Acknowledgements}
The third author thanks Shayan Talaei for many helpful discussions regarding the experiments.

\bibliography{ref}
\bibliographystyle{icml2026}

\newpage
\appendix
\onecolumn
\section{Comparison to Related Work}\label{sec:vgb}
In this section, we argue that Verifier Guided Backtracking (VGB)   \citet{rohatgi2025tamingimperfectprocessverifiers} is a special case of the Markov chain with rewinding.
We restate the VGB in a self-contained way.
Fix:
(1) a prompt $x$;
(2) a discrete token alphabet $\mathcal{A}$;
(3) a reference policy (base model) $\pi_{\mathrm{ref}}(\cdot \mid x, y_{1:h})$, which for every prefix
$y_{1:h}\in\mathcal{A}^h$ is a probability distribution over next tokens $a\in\mathcal{A}$;
and
(4) a nonnegative base value function $V_b(x,y_{1:h})\ge 0$ defined for every prefix $y_{1:h}\in\mathcal{A}^h$
(and similarly $V_b(x,y_{1:h},a)\ge 0$ defined for extended prefixes). 

Define the \emph{prefix tree}
$
\mathcal{T} \;:=\; \bigcup_{h=0}^{H} \mathcal{A}^h,
$
whose root is the empty prefix $\emptyset\in\mathcal{A}^0$.
For $y_{1:h}\in\mathcal{A}^h$ with $h\ge 1$, its parent is $\mathrm{par}(y_{1:h}) := y_{1:h-1}$.
For $h\le H-1$ where $H$ is the maximum depth of the tree, its children are $\mathrm{ch}(y_{1:h}) := \{(y_{1:h},a): a\in\mathcal{A}\}$.

Define the \emph{neighborhood} of a node $y\in\mathcal{T}$ by
$
N(y) := \{\mathrm{par}(y)\}   \cup  \mathrm{ch}(y) \}.
$
Given the current node $y=y_{1:h}$, define non-normalized weights on $N(y)$ by
\[
w(u \mid x,y) \;:=\;
\begin{cases}
V_b(x,y) & \text{if } u=\mathrm{par}(y)\ \text{and}\ y\neq\emptyset,\\[4pt]
\pi_{\mathrm{ref}}(a \mid x,y)\cdot V_b(x,y,a)
& \text{if } u=(y,a)\in\mathrm{ch}(y),\\[4pt]
0 & \text{otherwise.}
\end{cases}
\]
Let
$
p(u\mid x,y) := \frac{w(u\mid x,y)}{ \sum_{u\in N(y)} w(u\mid x,y)} \ \ \text{for } u\in N(y).
$

The VGB algorithm operates by evolving a Markov chain $(Y_t)_{t\ge 0}$.
The chain is initialized at $Y_0=\emptyset$ and evolves as follows:
given $Y_t=y$, sample $Y_{t+1}\sim p(\cdot\mid x,y).$
The process continues until a target state is reached.

\begin{lemma}
\label{lem:vgb}
The VGB process is an instance of the Markov chain with rewinding.
\end{lemma}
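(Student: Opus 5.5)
The plan is to exhibit an explicit encoding of the VGB walk as a rewinding algorithm acting on an appropriately chosen fully observable Markov chain $M=(\Omega,P)$. The natural choice is to take $\Omega := \mathcal{T}$, the prefix tree itself, and to let the \emph{forward} transitions of $M$ be generation by the base model, $P(y,(y,a)) := \pi_{\mathrm{ref}}(a\mid x,y)$ for $y\in\mathcal{A}^h$ with $h\le H-1$. Leaves at depth $H$ can be made absorbing, and target states are kept as in VGB. The point of this choice is that the only moves of VGB that are not forward generations are moves to $\mathrm{par}(y)$, and a parent is always a previously observed state, so these moves should be realizable by rewinding.

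The key steps, in order, are as follows.
\begin{enumerate}
\item Note that along any VGB trajectory the current node $Y_t$ and all of its ancestors have been observed, since the walk only moves between adjacent nodes of $\mathcal{T}$ starting from the root. Hence $\mathrm{par}(Y_t)$ is always a valid rewinding target.
\item Describe one VGB step from $y$ as a two-stage rewinding procedure. With probability $V_b(x,y)/Z(y)$, where $Z(y)=V_b(x,y)+\sum_a \pi_{\mathrm{ref}}(a\mid x,y)V_b(x,y,a)$, the algorithm moves to $\mathrm{par}(y)$, which is a rewind to an observed state. With the remaining probability it must land on child $(y,a)$ with probability proportional to $\pi_{\mathrm{ref}}(a\mid x,y)V_b(x,y,a)$.
\item Produce the child-stage distribution by rejection sampling. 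Rewind to $y$ and generate $g=(y,a)\sim\pi_{\mathrm{ref}}(\cdot\mid x,y)$. Then accept with probability $V_b(x,y,a)/\max_{a'}V_b(x,y,a')$; otherwise rewind to $y$ again and repeat. The accepted child has exactly the required conditional law, and it is observed because it was generated.
\end{enumerate}
The rewinding algorithm's choice of rewinding time depends only on the observed tree, the value function $V_b$, and internal randomness, which is all that the definition of rewinding algorithms asks for. So the sequence of ``current'' nodes is distributed exactly as $(Y_t)$, and hitting the target coincides.

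The main obstacle is how literally ``instance'' is meant. In the construction above, the rewinding process may generate extra rejected children, so VGB's step count is matched only up to a simulation overhead, not step-for-step. If a step-exact correspondence is wanted, I would first try the alternative encoding in which the underlying chain already has the VGB child weights: set $P(y,(y,a))\propto \pi_{\mathrm{ref}}(a\mid x,y)V_b(x,y,a)$ on $\mathcal{T}$. A parent move is then a rewind to the observed $\mathrm{par}(y)$, and a child move is one generation from $y$. A randomized rule that rewinds to $\mathrm{par}(y)$ with probability $V_b(x,y)/Z(y)$ and otherwise draws once from $y$ reproduces VGB exactly.

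One detail needs care in this encoding: a move to the parent must not itself count as producing a new state. I would handle it by noting that returning to an observed state is simply the algorithm's choice of rewinding point for its next generation, so the VGB walk is a (randomized, non-optimal) rewinding algorithm on this chain. The remaining checks, namely that weights vanish appropriately at the root and at depth $H$, are routine.
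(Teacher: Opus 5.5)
Your proposal is correct, and your second (``step-exact'') encoding is the paper's proof. The paper also takes $\Omega=\mathcal{T}$, puts transition mass only on children with $P(y,(y,a))\propto \pi_{\mathrm{ref}}(a\mid x,y)\,V_b(x,y,a)$, and realizes a VGB parent move as a rewind to the already observed $\mathrm{par}(y)$.

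Your treatment of the parent move is more careful than the paper's. The paper works from ``the most recently produced state $u_t$'' and says the next state $u_{t+1}$ is the parent. In the formal model, however, a rewind only chooses where the next generation comes from. So one must track VGB's current node separately, since it may climb several levels before the next generation, exactly as you do. The generated states are then precisely VGB's downward moves, and VGB's step count exceeds the rewinding count by the number of parent moves.

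Your primary construction is a genuinely different route. Keeping $P=\pi_{\mathrm{ref}}$ leaves the chain equal to the base LLM, which matches the paper's motivating view, and moves the verifier entirely into the algorithm via rejection sampling. The price is that it is a simulation rather than an instance. It costs an expected $\max_{a'}V_b(x,y,a')/\sum_a\pi_{\mathrm{ref}}(a\mid x,y)V_b(x,y,a)$ generations per downward VGB move.

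One claim there is slightly off: hitting the target need not coincide. A rejected proposal may itself be a target state, and the rewinding process stops as soon as $z$ is obtained. So the simulated process can stop earlier than VGB. This is harmless for an upper bound, but the two processes are not identical. The exact encoding avoids the issue, because every state it generates is a VGB move, and VGB can first reach a node only by moving down.
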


\begin{proof}
Instantiate our framework with the nodes of the prefix tree $\mathcal{T}$ as the state space.
The initial state is the root of the prefix tree corresponding to the prompt $x$, and an empty string $y = \emptyset$ of tokens generated so far.
For a node in the tree, the transition probabilities are defined on the children \emph{only}, proportional to the non-normalized weights.

With this setup, the VGB algorithm can be interpreted in our model as follows.
Given the most recently produced state $u_t = (x, y)$ (corresponding to the prompt $x$, and the string of generated tokens $y$),
the algorithm either rewinds to the parent $\mathrm{par}(u_t)$ with probability $p(\mathrm{par}(y) \mid x, y)$, or evolves a child otherwise.
As a result, the next state $u_{t+1}$ has the desired distribution as in VGB.
\end{proof}

\section{Omitted Proofs}
\label{sec:proofs}

This section includes the proofs deferred from the main body of the paper. The claims are restated here for ease of reference.

\lemmaOptByS*

To prove the lemma, we first show that the optimal algorithm depends only on the set of distinct elements of a sequence $X$ and not on the order or multiplicity of states in the history.

\begin{lemma}\label{clm:opt_s}
    Let  $X^{(1)}$ and $X^{(2)}$ be two different sequences of obtained states with the same set of distinct states $S$. We have 
    $$\opt(X^{(1)}) = \opt(X^{(2)}).$$
\end{lemma}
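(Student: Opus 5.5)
We argue by a simulation (coupling) argument: any rewinding algorithm that runs after history $X^{(1)}$ can be imitated after history $X^{(2)}$ with the same distribution of future steps, and vice versa. Then $\opt(X^{(2)}) \le \opt(X^{(1)})$, and by symmetry equality holds. The key fact is that a rewinding step's outcome depends only on the \emph{state} $x_{t'}$ being rewound to, through the row $P(x_{t'}, \cdot)$. Neither the time index $t'$ nor the position of that state in the history matters.

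\textbf{The simulation map.} Fix an arbitrary (possibly randomized, history-dependent) algorithm $\mathcal{A}$ for the continuation from $X^{(1)}$. Build an algorithm $\mathcal{A}'$ for the continuation from $X^{(2)}$. Since $X^{(1)}$ and $X^{(2)}$ have the same set of distinct states $S$, fix a map $\phi$ sending each index $i$ of $X^{(1)}$ to some index $j$ of $X^{(2)}$ with $x^{(2)}_j = x^{(1)}_i$. Extend $\phi$ to all later steps by the identity shift: the $k$-th newly generated state in one process corresponds to the $k$-th newly generated state in the other.

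\textbf{Running the simulation.} $\mathcal{A}'$ keeps a virtual copy of the history that $\mathcal{A}$ would see, namely $X^{(1)}$ followed by the states generated so far. Whenever $\mathcal{A}$ (fed this virtual history) chooses rewinding time $t'$, $\mathcal{A}'$ rewinds to $\phi(t')$. Both positions hold the same state, so the next state is drawn from the same distribution $P(x_{t'},\cdot)$. We can therefore couple the two processes so they generate identical sequences of new states, and $\mathcal{A}$'s decisions are reproduced exactly. The target $z$ is hit at the same step in both processes, or is already in $S$ for both, in which case both costs are $0$.

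\textbf{Conclusion and main difficulty.} Under the coupling the hitting times are equal pathwise, so $\mathbb{E}[T_{\mathcal{A}'}] = \mathbb{E}[T_{\mathcal{A}}]$. Taking the infimum over $\mathcal{A}$ gives $\opt(X^{(2)}) \le \opt(X^{(1)})$, and swapping roles gives equality. The main step requiring care is the induction over steps establishing that the joint law of (decisions, generated states) is identical under the coupling. This holds because each transition's distribution is a function of the rewound state alone, which $\phi$ preserves.

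A minor issue is that $\opt$ is an infimum that may not be attained. The coupling argument works for every algorithm, so the inequality between infima follows without needing an optimal algorithm to exist.
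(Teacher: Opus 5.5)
Your proposal is correct and is essentially the paper's proof. Both arguments couple an arbitrary algorithm run after $X^{(1)}$ with one run after $X^{(2)}$ that rewinds to the same state at every step, so the observed state sets evolve identically and $z$ is hit at the same time; this gives $\opt(X^{(2)}) \le \opt(X^{(1)})$, with equality by symmetry. Your explicit index map $\phi$ with a virtual copy of $\mathcal{A}$'s history, and your remark that the argument passes to infima without assuming an optimal algorithm exists, are slightly more careful than the paper, which identifies the rewinding choice directly with a state $a \in S_t$ and takes $\sigma$ to be optimal.
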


\begin{proof}
Fix an arbitrary algorithm $\sigma$ used after history $X^{(1)}$.
We construct a corresponding algorithm $\sigma'$ to be used after $X^{(2)}$ such
that the trajectories under $\sigma$ and $\sigma'$ can be perfectly
coupled while maintaining the following invariant.
Let $S_t^{(1)}$ and $S_t^{(2)}$ denote the sets of states observed up to time $t$
under $\sigma$ and $\sigma'$, respectively.
We maintain the invariant
\[
S_t^{(1)} = S_t^{(2)} = S_t \qquad \text{for all } t \ge 0.
\]
This holds initially because $X^{(1)}$ and $X^{(2)}$ have the same set of distinct states $S$.

Now assume the invariant holds at step $t$.
algorithm $\sigma$ rewinds to some time $t' \le t$, equivalently, it chooses a state
$a \in S_t$.
We define $\sigma'$ so that it also rewinds to the same state $a$.
Since $a \in S_t^{(1)} = S_t^{(2)}$, this is always possible.
Because the transition probabilities depend only on the chosen state and not on the
history, both runs now sample
$y \sim P(a,\cdot).$
We couple them so that both runs obtain the same next state $y$.
Therefore,
$$
S_{t+1}^{(1)} = S_t^{(1)} \cup \{y\}
              = S_t^{(2)} \cup \{y\}
              = S_{t+1}^{(2)},
$$
so the invariant holds at time $t+1$.
Thus, under this coupling, the sets of distinct states for $X^{(1)}$ and $X^{(2)}$ evolve identically at any time $t$ and therefore, both algorithm $\sigma$ and $\sigma'$ hit $z$ at the same time.

Let $\sigma(X^{(1)})$ be the expected hitting time of algorithm $\sigma$ for $X^{(1)}$. We define $\sigma'(X^{(2)})$ respectively. Pick $\sigma$ be the algorithm with optimal expected hitting time on  $X^{(1)}$.
For the coupled algorithm $\sigma'$ on $X^{(2)}$, we have 
$$
\opt(X^{(2)}) \le \sigma'(X^{(2)}) = \sigma(X^{(1)}) = \opt(X^{(1)}).
$$
Symmetrically, we also obtain
$\opt(X^{(2)}) \ge \opt(X^{(1)}).$
Hence $\opt(X^{(1)}) = \opt(X^{(2)}).$
\end{proof}

\begin{proof}[Proof of \cref{lem:opt-by-S}]
   Let $X$ be an arbitrary sequence of obtained states, and let $S$ be its set
    of distinct states.
    Since $S$ is precisely the set of distinct states in $X$, we can construct a
     sequence $\widetilde{X}$ that visits exactly the states in $S$
    (each at most once) and has the same set of distinct states as $X$.
    We denote this sequence by $S$ as well, viewed
    now as some fixed sequence whose distinct-state set is $S$.
    Applying \cref{clm:opt_s} with $X^{(1)} = X$ and $X^{(2)} = S$, we obtain $ \opt(X) = \opt(S).$
    This shows that the optimal hitting time starting from any history $X$
    depends only on its set of distinct states $S$.
\end{proof}

\corOptRecursion*
\begin{proof}
    Consider the hitting time of the optimal algorithm described in \cref{thm:optimal-non-branching},
    and let $y$ be the first state drawn by the algorithm
    (obtained by rewinding to $x_0$).
    There are two possibilities for $y$.
    Either $\opt(y) \geq \opt(x)$, in which case it is ignored and the expected number of remaining steps is $\opt(x)$,
    or $\opt(y) < \opt(x)$, in which case $x^*$ is set equal to $y$ and the expected remaining number of steps is $\opt(y)$.
    Therefore, it holds that
    \begin{align*}
        \opt(x) &= 1 + \sum_{y \notin L}P(x, y) \opt(x) + \sum_{y \in L}P(x, y) \opt(y) \\
        &= 1 + P(x, \Omega\setminus L) \opt(x) + \sum_{y \in L}P(x, y) \opt(y).
    \end{align*}
    The claim follows from moving $P(x, \Omega\setminus L) \opt(x)$ to the left-hand side and dividing both sides by $P(x, L)$.
\end{proof}

\thmDijkstra*

\begin{proof}
    We prove the following invariant inductively:
    At any point, for all states $x \in \Omega$, it holds that $d_x \geq \opt(x)$, and this holds with equality for all states $x \in T$.
    For the base case, we consider the algorithm at the end of the first iteration.
    At this time, it holds that $T = \{z\}$ and $d_z = \opt(z) = 0$.
    For all $x \neq z$, we have
    $$
    d_x = \begin{cases}
        \frac{1}{P(x, z)} & \textnormal{if $P(x, z) \neq 0$, and}  \\
        \infty & \textnormal{otherwise.}
    \end{cases}
    $$
    This value is an upper bound for $\opt(x)$ as it corresponds to the simple algorithm which rewinds to $x$ until it obtains the target state $z$.

    For the induction step, given the set $T$, we define a corresponding algorithm for each initial state $x \notin T$.
    The algorithm starts by repeatedly rewinding to $x$, until it obtains a state $g \in T$.
    Then, it proceeds by following the optimal algorithm for $g$.
    The expected hitting time of this algorithm can be analyzed as follows.
    In expectation, it takes $\frac{1}{P(x, T)}$ to obtain a state in $T$ from $x$.
    Then, conditioned on that the newly obtained state $g$ is in $T$, we have $g = y$ with probability $\frac{P(x, y)}{P(x, T)}$,
    in which case the remainder of the algorithm takes $\opt(y)$ steps to terminate in expectation.
    Therefore, the expected hitting time of this algorithm is equal to:
    $$
    \frac{1}{P(x, T)} + \sum_{y \in T}\frac{P(x, y)}{P(x, T)}\opt(y).
    $$
    Observe that if $\opt(y) = d_y$ for all $y \in T$, then the value above is equal to $d_x$.
    
    Hence, it remains to show that when $x^*$ is added to $T$, it holds that $d_{x^*} = \opt(x^*)$.
    Let $A \subseteq \Omega \setminus T$ be the set of states $x$ outside $T$ that minimize $\opt(x)$,
    and let $m$ denote the minimum.

    For any $x \in A$, the algorithm corresponding to $d_{x}$ (as defined in the previous paragraph) is the same as the optimal algorithm outlined in \cref{thm:optimal-non-branching} since $T$ contains every state $x'$ with $\opt(x') < \opt(x)$.
    Therefore, we have that $d_{x} = \opt(x) = m$ for all $x \in A$.
    Furthermore, for any $x \notin A$, it holds that $d_x \geq \opt(x) > m$.
    As a result, $x^* := \argmin_{x \notin T} d_x$ is a member of $A$, and it holds that $d_{x^*} = \opt(x^*) = m$.

    Finally, we note that when the algorithm terminates, for any state $x \notin T$, there is no path of positive probability from $x$ to $z$. Otherwise, the algorithm would have expanded $T$ through one of the edges in the path that go from $\Omega \setminus T$ to $T$.
    As a result, for any $x \notin T$, it holds that $\opt(x) = d_x = \infty$.
    Therefore, when \cref{alg:dijkstra} terminates, $d_x$ is equal to $\opt(x)$ for all $x \in \Omega$.
    It can be easily seen that it can be implemented in $O(n^2)$ or $O( (n + m) \log n)$, which concludes the proof.
\end{proof}

\begin{lemma} \label{clm:aux-recursion}
    For a state $x \neq z$, let $L \subseteq \Omega$ be the set of states $y$ such that $\opt(y) < \opt(x) - c$.
    Then, it holds that
    $$
    \opt'(x) = \frac{1 + \sum_{y \in L}P(x, y) \opt'(y)}{P(x, L)}.
    $$
\end{lemma}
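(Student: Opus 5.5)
The approach is a first-step analysis of the auxiliary algorithm of \cref{def:aux}, in the same way as the proof of \cref{cor:opt-recursion}. Start the auxiliary algorithm at $x \neq z$ with $x^* = x$. The essential feature of \cref{def:aux} is that it is memoryless with respect to the current special state. Once $x^*$ is replaced by a state $y$, the remaining behaviour depends only on $y$: it repeatedly rewinds to $y$ and accepts only states $g$ with $g = z$ or $\opt(g) < \opt(y) - c$. This is exactly the auxiliary algorithm started at $y$, so its expected remaining number of steps is $\opt'(y)$.

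We adopt the convention that $z \in L$. Since $\opt(z) = 0$, we have $\opt(z) < \opt(x) - c$ whenever $\opt(x) > c$. If instead $\opt(x) \le c$ with $x \neq z$, we still treat $z$ as an acceptance case, with $\opt'(z) = 0$. Its term in the sum vanishes either way, so reading $L$ as the acceptance set, namely those $y$ with $\opt(y) < \opt(x) - c$ together with $z$, leaves the formula unchanged.

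Next, condition on the first generated state $g \sim P(x,\cdot)$.
\begin{itemize}
\item If $g \notin L$, the algorithm discards $g$. It keeps $x^* = x$, and the rest of the process is distributed exactly as a fresh run from $x$, which costs $\opt'(x)$ more steps in expectation.
\item If $g = y \in L$, the special state becomes $y$, or the run halts if $y = z$, and the remaining cost is $\opt'(y)$.
\end{itemize}
Adding the one step used to generate $g$ gives
\[
\opt'(x) = 1 + P(x, \Omega\setminus L)\,\opt'(x) + \sum_{y \in L} P(x,y)\,\opt'(y).
\]
Finally, move the middle term to the left-hand side and divide by $P(x,L)$.

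The main obstacle is justifying this rearrangement, which requires $\opt'(x) < \infty$ and $P(x,L) > 0$. I would handle it as follows. If $P(x,L) = 0$, the algorithm never leaves $x$, so $\opt'(x) = \infty$, and the right-hand side is $\infty$ under the usual convention, so the identity holds formally. If $P(x,L) > 0$, the number of steps before leaving $x$ is geometric with mean $1/P(x,L)$. We then apply strong induction on $\opt(x)$: every $y \in L$ has $\opt(y) < \opt(x) - c$, and $\opt$ takes only finitely many values because $\Omega$ is finite, so the induction is well founded. This expresses $\opt'(x)$ directly as
\[
\frac{1}{P(x,L)} + \sum_{y\in L}\frac{P(x,y)}{P(x,L)}\,\opt'(y),
\]
which is the claimed formula and sidesteps any subtraction of infinite quantities.
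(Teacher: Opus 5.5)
Your proposal is correct and is the paper's own argument: a first-step analysis conditioning on the first draw from $x$, where a state outside $L$ restarts the process at cost $\opt'(x)$ and a state $y\in L$ continues at cost $\opt'(y)$, followed by rearranging. Your extra care with $P(x,L)=0$ and infinite values is a harmless strengthening. One small correction: the convention for $z$ is unnecessary, since $\opt(x)\ge 1 > c$ for every $x\neq z$ already forces $z\in L$; and in the case you worried about, adding $z$ to $L$ would change the denominator $P(x,L)$, so it would not leave the formula unchanged as you claim.
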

\begin{proof}
    The proof is similar to that of \cref{cor:opt-recursion}.
    After the first state $y$ is drawn,
    either $y \notin L$, in which case the expected remaining number of steps is $\opt'(x)$,
    or $y \in L$, in which case the expected remaining number of steps is $\opt'(y)$.
\end{proof}

\lemmaAuxAnalysis*

\begin{proof}
    Given that $c = \frac{\epsilon}{1 + \epsilon}$ holds, $\opt'(x) \leq (1 + \epsilon)\opt(x)$ and $(1 - c) \opt'(x) \leq \opt(x)$ are equivalent.
    We prove the latter inductively, ordering the states by $\opt(x)$.
    More specifically, for the base case, we consider $x = z$ where $\opt(x) = \opt'(x) = 0$.
    Then, for the induction step, we assume the claim for all $y$ with $\opt(y) < \opt(x)$, and prove it for $x$.

    Take any state $x \neq z$. We divide the states into three groups based on their optimal hitting time:
    \begin{align*}
    H &:= \{y \in \Omega \mid \opt(x) \leq \opt(y) \}, \\
    M &:= \{y \in \Omega \mid \opt(x) - c \leq \opt(y)  < \opt(x) \},\ \textnormal{and} \\
    L &:= \{y \in \Omega \mid \opt(y) < \opt(x) - c \}.
    \end{align*}
    Recall the recursive formulation of $\opt(x)$ (\cref{cor:opt-recursion}, note the change in definition of $L$):
    $$
    \opt(x) = \frac{1 + \sum_{y\in M \cup L}P(x, y) \opt(y)}{P(x, M \cup L)}.
    $$
    For $y \in M$, it holds $\opt(y) \geq \opt(x) - c$ by definition.
    For $y \in L$, the induction hypothesis implies $\opt(y) \geq (1 - c)\opt'(y)$.
    Therefore, $\opt(x)$ can be lower-bounded as follows:
    \begin{align*}
        \opt(x) &\geq \frac{1 + \sum_{y\in L}P(x, y) (1-c)\opt'(y)}{P(x, M \cup L)} \\ 
        & + \frac{ \sum_{y \in M}P(x, y)(\opt(x) - c)}{P(x, M \cup L)}\\
        & = \frac{1 + \sum_{y\in L}P(x, y) (1-c)\opt'(y)}{P(x, M \cup L)} \\
        & + \frac{P(x, M)}{P(x, M \cup L)}(\opt(x) - c).
    \end{align*}
    Moving $\frac{P(x, M)}{P(x, M \cup L)}\opt(x)$ to the left-hand side and dividing both sides by $1 -  \frac{P(x, M)}{P(x, M \cup L)}$, we get
    $$
    \opt(x) \geq \frac{1 + \sum_{y\in L}P(x, y) (1-c)\opt'(y)}{P(x, L)} - \frac{P(x, M)}{P(x, L)} c.
    $$
    Further algebraic manipulation of the inequality and plugging in the recursive formulation of $\opt'(x)$ (\cref{clm:aux-recursion}) yields
    \begin{align*}
        \opt(x)
        &\geq \frac{1 + \sum_{y\in L}P(x, y) (1-c)\opt'(y)}{P(x, L)} - \frac{P(x, M)}{P(x, L)} c \\
        &= (1-c)\frac{1 + \sum_{y\in L}P(x, y) \opt'(y)}{P(x, L)}  \\ 
        & + \frac{1}{p(x, L)}c - \frac{P(x, M)}{P(x, L)} c \\
        &\geq (1-c)\frac{1 + \sum_{y\in L}P(x, y) \opt'(y)}{P(x, L)} \\
        &= (1 - c)\opt'(x).
    \end{align*}
    This concludes the proof.
\end{proof}

\meanMedian*

\begin{proof}
    Let $N = \log \frac{1}{\delta}$.
    First, for each $1 \leq i \leq N$, let $X^{(i)}$ be the mean of $k = 32 \frac{\lambda^2}{\epsilon^2}$ independent approximations obtained from $\oapx(s)$.
    As a result, each $X^{(i)}$ has a mean of $\opt(s)$ and a variance of $\frac{2\lambda^2}{k} = \frac{\epsilon^2}{16}$.
    Therefore, by Chebyshev's inequality, it holds:
    $$
    \Pr[\card{X^{(i)} - \opt(s)} \geq \epsilon] \leq \frac{\epsilon^2 / 16}{\epsilon^2} \leq 
    \frac{1}{16}.
    $$

    We let the final estimate $X_s$ be equal to the median $\{X^{(i)}\}_{i \leq N}$, and prove that it satisfies
    $$
    \Pr[\card{X_s - \opt(s)} > \epsilon] \leq \delta.
    $$
    To show this, we note that for the median to be outside the interval $\opt(s) \pm \epsilon$, at least half of the means $\{X^{(i)}\}_{i \leq N}$ must be outside the interval.
    Taking the union bound over all the possibilities for these $\floor{N / 2}$ indices, the probability of this event is at most
    \begin{align*}
        \binom{N}{\floor{N/2}} \cdot \left(\frac{1}{16}\right)^{N/2}
        &\leq 2^N \cdot \left(\frac{1}{16}\right)^{N/2} \\
        &\leq 2^{-N} \\
        &\leq \delta.
    \end{align*}
    Therefore, $X_s$ has the desired property, which concludes the proof.
\end{proof}

\thmStable*

\begin{proof}
    Without loss of generality, we assume that the guarantee of the mean-median approximation (\cref{lem:mean-median}) holds in every step of \cref{alg:stable} deterministically, and show that the expected number of steps is at most $2\opt(x_0)$ when it does.
    That is,
    \begin{equation}
    \card{X_{x^*} - \opt(x^*)} \leq \frac{1}{10} \qquad \text{and} \qquad \card{X_{g} - \opt(g)} \leq \frac{1}{10}. \label{eq:mm-guarantee}
    \end{equation}
    We show this is without loss of generality through a slight modification to the algorithm.
    We add a reset mechanism to the algorithm:
    If the target is not reached within $4N$ steps, then the algorithm discards the run and starts from the beginning.
    Each run involves $8N$ mean-median approximations (two per each step),
    and by \cref{lem:mean-median} each of them satisfies the guarantee with probability $\delta = \frac{1}{10 N}$.
    Therefore, the probability that every mean-median approximation in a run satisfies the guarantee is at least:
    $$
    \left(1 - \frac{1}{10 N}\right)^{8N} \geq \frac{e^{-8/10}}{2} \geq \frac{1}{5}.
    $$
    When this event happens, we show that the expected number of steps before reaching the target state is at most $2N$.
    As a result, by the Markov inequality, such a run has a probability of at least $\frac{1}{2}$ to reach the target before the reset at $4N$ steps.
    That is, in expectation, it takes $\frac{1}{1/5} \cdot \frac{1}{1/2} = 10$ runs to reach the target state, incurring only a constant factor on the number of steps.

    Hence, it remains to show that the expected number of steps to reach the target state is at most $2 \opt(x_0)$, assuming \eqref{eq:mm-guarantee} holds in every step.
    The analysis is through comparing the stable algorithm (\cref{alg:stable}) to the auxiliary algorithm (\cref{def:aux}).
    Invoking \cref{lem:aux-analysis}, with $\epsilon = 1$ and $c = \frac{1}{2}$, yields
    $$
    \opt'(x_0) \leq 2 \opt(x_0).
    $$
    We show that the stable algorithm has at most as many steps as the auxiliary rewinding algorithm.
    More precisely, let $\opt''(x)$ denote the expected number of steps of the stable algorithm, assuming \eqref{eq:mm-guarantee} holds.
    We prove that $\opt''(x) \leq \opt(x)$ for all $x$.

    We prove this claim using induction.
    Ordering the states increasingly based on $\opt(x)$, 
    we assume $\opt''(y) \leq \opt'(y)$ for all $y$ before $x$ (i.e., states $y$ with $\opt(y) < \opt(x)$), and show the claim holds for $x$.
    It holds trivially for the base case: $\opt''(z) = \opt'(z) = 0$.
    For the induction step, recall the recursive formulation of $\opt'(x)$:
    \begin{equation}
        \opt'(x) = \frac{1 + \sum_{y \in L}P(x, y) \opt'(y)}{P(x, L)} = 1 + \sum_{y \notin L} P(x, y) \opt'(x) + \sum_{y \in L}P(x, y) \opt'(y), \label{eq:rec1}
    \end{equation}
    where $L \subseteq \Omega$ is the set of states $y$ such that $\opt(y) < \opt(x) - c$.
    
    Similarly, we can obtain a recursive formulation for $\opt''(x)$.
    Let $M \subseteq \Omega$ be the set of states $y$ with $\opt(x) - c < \opt(y) < \opt(x)$.
    We note that since \eqref{eq:mm-guarantee} holds, the stable algorithm replaces $x^*$ by $g$ when $g \in L$, and it does not replace $x^*$ by $g$ when $g \notin M \cup L$.
    The only difference between the stable algorithm and the auxiliary algorithm is that the stable algorithm might replace $x^*$ for some of the states $g \in M$, with a probability of $Q(x^*, g)$, in which case it gains a benefit over the auxiliary algorithm.
    Therefore, we obtain
    $$
        \opt''(x) = 1 + \sum_{y \notin M \cup L} P(x, y) \opt''(x) 
        + \sum_{y \in M} P(x, y) (Q(x, y)\opt''(y) + (1 - Q(x, y))\opt''(x))
        + \sum_{y \in L}P(x, y) \opt''(y).
    $$
    Using the induction hypothesis we replace $\opt''(y)$ with $\opt'(y)$ to obtain:
    $$
        \opt''(x) \leq 1 + \sum_{y \notin M \cup L} P(x, y) \opt''(x) 
        + \sum_{y \in M} P(x, y) (Q(x, y)\opt'(y) + (1 - Q(x, y))\opt''(x))
        + \sum_{y \in L}P(x, y) \opt'(y).
    $$
    Rewriting \eqref{eq:rec1}, we have:
    $$
    \opt'(x) = 1 + \sum_{y \notin M \cup L} P(x, y) \opt'(x) 
        + \sum_{y \in M} P(x, y) (Q(x, y)\opt'(x) + (1 - Q(x, y))\opt'(x))
        + \sum_{y \in L}P(x, y) \opt'(y).
    $$
    Combining the two implies $\opt''(x) \leq \opt'(x)$, which concludes the proof of the induction.
    This yields the desired bound of $\opt''(x) \leq 2 \opt(x)$ on the number of steps.
    Furthermore, each step uses $O(\frac{\lambda^2}{\epsilon^2}\log\frac{1}{\delta}) = O(\lambda^2 \log \opt(x_0))$ Laplacian approximations.
    Therefore, the total cost is $O(\lambda^2 \cdot \opt(x_0) \log \opt(x_0))$.
    In particular, when $\lambda = \epsilon \opt(x_0)$, the total cost is $O(\epsilon^2 \cdot \opt(x_0)^3 \log \opt(x_0))$. This completes the proof.

    We also remark that when an upper bound $N$ is not available for $\opt(x_0)$, a scaled-up approximation of $2 \oapx(x_0)$ can be used for each run.
\end{proof}

\lemmaTreeOpt*

\begin{proof}
    The optimal hitting time is characterized by \cref{thm:optimal-non-branching}.
    Take any non-leaf state $x$, and let $N(x)$ be its neighbors in the tree.
    Since the underlying graph is a tree, there exists a neighbor $y^* \in N(x)$  such that $\dist(x, z) = \dist(y, z) + 1$, and for any other neighbor $y \neq y^*$ it holds that $\dist(x, z) = \dist(y, z) - 1$.
    Note that any path starting $x$ or any other neighbor in $N(x) \setminus \{y^*\}$ and ending in $z$, must go through $y^*$.
    Therefore, it holds $\opt(x) \geq \opt(y^*)$, and $\opt(y) \geq \opt(y^*)$ for any neighbor $y \neq y^*$.

    Based on this, we can invoke \cref{thm:optimal-non-branching}.
    There exists an optimal non-branching algorithm that operates as follows:
    \begin{enumerate}
        \item Maintain a \enquote{minimizer} state $x^*$, initially equal to $x_0$.
        \item Repeatedly rewind to $x^*$ to obtain a state $y$, until the new state satisfies $\dist(y, z) < \dist(x^*, z)$. \label{step:tree-alg-2}
        \item If $y$ is equal to the target state $z$, halt.
        Otherwise, let $x^* := y$ and go back to step \ref{step:tree-alg-2}.
    \end{enumerate}
    Equivalently, the algorithm moves on the shortest path from $x_0$ to $z$.
    At each vertex $x$, it repeatedly generates neighbors until the next vertex on the shortest path is drawn.
    Since the next vertex on the shortest path is drawn with probability $1/\Delta$, this takes $\Delta$ steps in expectation, and the entire process lasts $\Delta \cdot \dist(x_0, z)$ in expectation.
\end{proof}

\thmLB*
\begin{proof}
    We let $\Delta = 2^{1/\epsilon}$, so that $\opt(x_0) = 2^{1/\epsilon} 10 d$, and $d$ can be set arbitrarily large.
    By Yao's minimax lemma, it suffices to present an input distribution, such that any \emph{deterministic} algorithm that reaches the target state with at least constant probability uses at least $\Omega(2^{\opt(x_0)})$ queries.
    In fact, we show that this many queries is necessary for reaching $x_{\epsilon d}$ with constant probability, where recall that $x_0, \ldots, x_{10d}$ is the path starting at the initial state $x_0$ and ending at the target state $x_{10d} = z$.

    We construct the input distribution based on the Markov chain outlined in \cref{sec:adversarial-noise}.
    The structure of the tree and the node labels are fixed, then one of the leaves is chosen uniformly at random to be the target state $z$, and the observations are set accordingly.
    The observations in the shallow part of the tree (i.e., among the vertices within distance $\epsilon d$ of the root) the observations $\oapx(\cdot)$ are symmetric, in the sense that any two states of the same depth have the same observation.
    Furthermore, considering any two nodes at depth $\epsilon d$, the observations in the two subtrees are identical, except for the node $x_{\epsilon d}$.
    We refer to the nodes at depth $\epsilon d$ as the middle nodes, and call $x_{\epsilon d}$ the special middle node.
    As such, the observations made within depth $\epsilon d$ reveal nothing about the location of the $x_{\epsilon d}$,
    and any observations made in the subtree of a middle node $s$ only reveals whether $s$ is special or not,
    and reveals no further information about the location of $x_{\epsilon d}$ relative to $s$.
 
    At any point in the run time of the algorithm, we say that a middle node has been revealed if any observation is made in its subtree.
    By the previous paragraph, conditioned on the observations made so far, if none of the revealed vertices is special (i.e., if $x_{\epsilon d}$ has not been reached), then the special node could be any of the remaining middle nodes uniformly at random.
    As such, any algorithm that reveals a $c$-fraction of the middle nodes, reaches $x_{\epsilon d}$ with a probability of at most $c$.

    Therefore, any algorithm that reaches the target state $z$ (and hence the special middle node $x_{\epsilon d}$) with constant probability, must reveal at least $\Omega(\Delta^{\epsilon d}) = \Omega(\exp(\opt(x_0))$ middle nodes, and as a result, take at least as many steps. This concludes the proof.
\end{proof}

\end{document}